\newcommand{\kk}[2][]{\todo[inline,caption={}, linecolor=orange,backgroundcolor=orange!25,bordercolor=orange,#1]{\footnotesize KK: #2}}
\newcommand{\bc}{\bar c}
\DeclareMathAlphabet{\pazocal}{OMS}{zplm}{m}{n}
\newcommand{\bz}{\bar z}
\newcommand{\I}{I[c]}
\newcommand{\Ii}{I[c_i]}
\newcommand{\bI}{I[\bc]}
\newcommand{\bq}{\bar q}
\newcommand{\rmin}{r_{\min}}
\newcommand{\bu}{\bar u}
\newcommand{\tE}{\tilde E}
\DeclarePairedDelimiterX{\infdivx}[2]{(}{)}{%
  #1\;\delimsize\|\;#2%
}
\newcommand{\kl}{D_{\mathrm{KL}}\infdivx}
\begin{document}

\title{Can AI Be as Creative as Humans?}
\vspace{10pt}
\author{
\normalsize \textbf{Haonan Wang}\textsuperscript{1} \quad 
 \normalsize   \textbf{James Zou}\textsuperscript{2}\quad 
 \normalsize   \textbf{Michael Mozer}\textsuperscript{3}\quad 
  \normalsize  \textbf{Anirudh Goyal}\textsuperscript{3} \quad 
 \normalsize   \textbf{Alex Lamb}\textsuperscript{4}\quad  
 \normalsize   \textbf{Linjun Zhang}\textsuperscript{5}\quad \\  
  \normalsize  \textbf{Weijie J. Su}\textsuperscript{6}\quad  
  \normalsize  \textbf{Zhun Deng}\textsuperscript{7}\quad  
  \normalsize  \textbf{Michael Qizhe Xie}\textsuperscript{1}\quad  
 \normalsize   \textbf{Hannah Brown}\textsuperscript{1} \quad 
  \normalsize  \textbf{Kenji Kawaguchi}\textsuperscript{1}
\\[1.5mm]
  \small    \textsuperscript{1}National University of Singapore\quad  
 \small   \textsuperscript{2}Stanford University\quad 
 \small   \textsuperscript{3}Google DeepMind\quad \\ 
  \small  \textsuperscript{4}Microsoft Research\quad  
 \small   \textsuperscript{5}Rutgers University\quad  
  \small  \textsuperscript{6}University of Pennsylvania \quad  
  \small  \textsuperscript{7}Columbia University
}

\date{}
\maketitle
\vspace{-25pt}
\begin{center}
         \fontsize{9.5pt}{\baselineskip}\selectfont
         {Project Page:}~\tt\href{https://ai-relative-creativity.github.io/}{ai-relative-creativity.github.io}
         \vskip 0.25in
\end{center}

\begin{abstract}
Creativity serves as a cornerstone for societal progress and innovation. With the rise of advanced generative AI models capable of tasks once reserved for human creativity, the study of AI's creative potential becomes imperative for its responsible development and application.  
In this paper, we prove in theory that AI can be as creative as humans under the condition that it can properly fit the data generated by human creators. Therefore, the debate on AI's creativity is reduced into the question of its ability to fit a sufficient amount of data. 
To arrive at this conclusion, this paper first addresses the complexities in defining creativity by introducing a new concept called \textit{Relative Creativity}. 
Rather than attempting to define creativity universally, we shift the focus to whether AI can match the creative abilities of a hypothetical human.  
This perspective draws inspiration from the Turing Test, expanding upon it to address the challenges and subjectivities inherent in assessing creativity.  
The methodological shift leads to a statistically quantifiable assessment of AI's creativity, term \textit{Statistical Creativity}. 
This concept, statistically comparing the creative abilities of AI with those of specific human groups, facilitates theoretical exploration of AI's creative potential.
Our analysis of the AI training process reveals that by fitting extensive conditional data, including artworks along with their creation conditions and processes, without marginalizing out the generative conditions, AI can emerge as a hypothetical new creator.
The creator, though non-existent, possesses the same creative abilities on par with the human creators it was trained on. 
Building on theoretical findings, we discuss the application in prompt-conditioned autoregressive models, providing a practical means for evaluating creative abilities of generative AI models, such as Large Language Models (LLMs). 
Additionally, this study provides an actionable  training guideline, bridging the theoretical quantification of creativity with practical model training. 
Through these multifaceted contributions, the paper establishes a framework for understanding, evaluating and fostering creativity in AI models.
\end{abstract}

\vspace{-5pt}
\section{Introduction}
\label{sec:introduction}
\vspace{-2pt}

Creativity, usually deemed as a quintessential human trait, is not just an individual trait but a transformative force that shapes societies, catalyzing advancements in science, technology, and the arts. It forms the backbone of innovation, fuels economic growth, and  social change~\citep{amabile1996creativity, boden03, kirkpatrick2023can}.
In the rapidly evolving landscape of the digital age, artificial intelligence (AI) has introduced new avenues for creative endeavors. Advanced generative deep learning models have not only shown proficiency in solving complex problems, such as drug and protein synthesis~\citep{jumper2021highly}, but they have also excelled in artistic pursuits, including composing poetry and crafting narratives~\citep{franceschelli2023creativity}. Additionally, these models have displayed remarkable aptitude for generating novel ideas, even outpacing MBA students in terms of both quality and uniqueness of innovative product and service concepts~\citep{mba_vs_chatgpt}.
As AI's generative capabilities blur the lines between human and machine-generated work, this raises the stakes for the study of creativity, especially for the creativity of AI.

Traditionally, human creativity has been extensively studied and analyzed across various disciplines, such as psychology, philosophy, and cognitive science~\citep{amabile1996creativity, boden03, kirkpatrick2023can}. However, there is still no consensus on defining ``creativity'', primarily due to the subjective nature involved in the various definitions proposed in scholarly literature~\citep{runco2012standard,sawyer2012extending}. 
Even for the widely accepted definition of creativity as a blend of novelty and quality~\citep{boden03, camara2007creativity}, the inherent subjectivity of these criteria remains problematic. 
What is deemed novel and of quality can differ greatly across various cultures, disciplines, and time periods. For instance, a computer science paper written in iambic pentameter might be deemed highly novel but of low quality by one community, whereas another might view it as higher in quality but less novel. This lack of consensus on the creativity of humans hinders progress in understanding and developing creativity in AI.
In addition, sophisticated AI models are inherently designed to generate outputs reflecting their training data~\citep{foster2022generative}. This raises critical questions about the authenticity of their creations—whether they are genuinely original or merely repackaged elements~\citep{somepalli2023diffusion}. Such a tendency towards replication adds an additional layer of complexity to the endeavor of defining and analyzing creativity in AI. Against this backdrop, establishing a framework to understand AI's creativity is an essential step towards the responsible development and further evaluation of AI.

\begin{figure}[t]
    \centering
    \includegraphics[width=\linewidth]{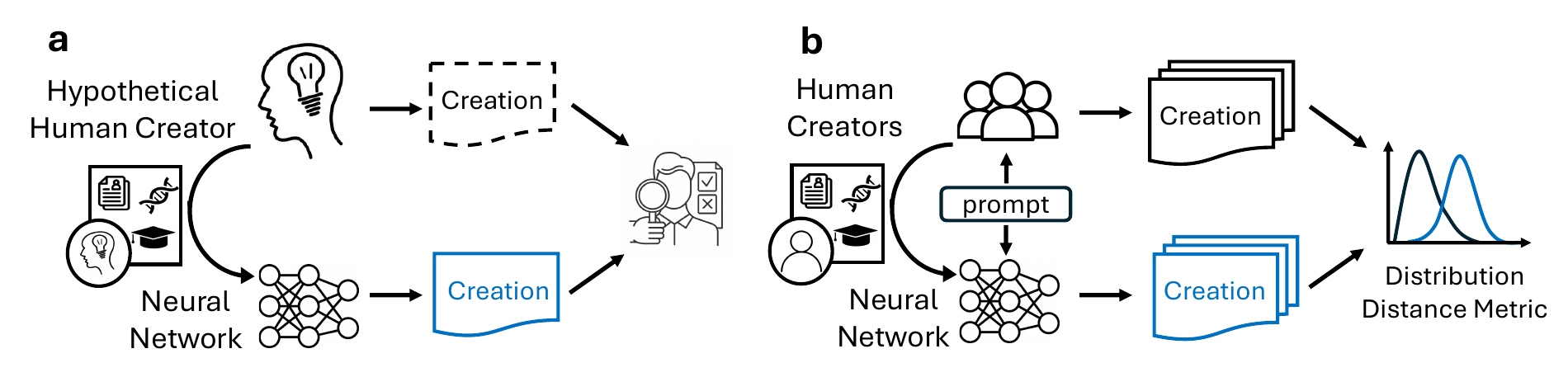}
    \caption{Illustration of Relative Creativity (\textbf{a}) and Statistical Creativity (\textbf{b}). (\textbf{a}): Relative Creativity assessed by AI's ability to create art indistinguishable from that of a hypothetical human, given the same biographical influences. (\textbf{b}): Statistical Creativity measured by AI’s ability to generate creations to prompts that are indistinguishable from those of existing human creators, as determined by a distribution distance metric.}
    \label{fig:illustration}
    \vspace{-10pt}
\end{figure}

In this work, we aim to establish a concrete framework for exploring creativity in artificial intelligence.  
We  begin with a  thought experiment: imagine an AI creates a piece of artwork, without prior knowledge of any similar creations. This AI-generated artwork remains undiscovered for 20 years. Decades later, when a human artist independently creates and gains recognition for an identical piece, a fascinating dilemma emerges. Should the AI's earlier work be recognized as creative, given it predates the human artist's creation, even before this new artist was known? The logical conclusion would be affirmative.
This thought experiment prompts us to shift from conventional approaches addressing the question, ``Can AI be creative?''—a question mired in the ambiguous and elusive task of defining creativity absolutely—we instead pivot towards a more concrete inquiry: \textit{``Can AI be as creative as humans?''}

In our study, we theoretically demonstrate that this is feasible if AI can properly fit data produced by human creatives.
In order to reach this conclusion, we formally structure the problem by introducing the concept of \textit{\textbf{Relative Creativity}}, where an AI model is deemed as creative as a hypothetical, yet realistic, human creator if it can produce works indistinguishable from that creator, as determined by an evaluator.
Building upon this, we present \textit{\textbf{Statistical Creativity}} , a means for understanding whether and to what degree AI achieves creativity by comparing it with existing human creators.
Through the lens of statistical creativity, our theoretical analysis of the training process suggests that AI has the potential to act as a hypothetical creator by effectively assimilating a massive amount of conditional data. 
This data encompasses a diverse range of creative works, along with the conditions and processes that led to their creation, from a specific group of creators. The creative ability of this hypothetical creator would be comparable to that of the group of creators upon which the AI's training is based.
The theoretical findings indicate that the emergence of human-like creativity will occur through the learning of extensive conditional data without marginalizing out the generative conditions and processes. 
In line with recent advancements in large language models (LLMs), such as GPT-4~\citep{openai2023gpt4} and Llama~\citep{touvron2023llama}, our research investigates statistical creativity in autoregressive models that excel in next-token prediction and generation within a prompting paradigm, which offer insights into the practical and valid evaluation for creativity of current AI models.
Moreover, our analysis of the training process provides practical guidelines for fostering creativity. It highlights the critical need to gather data on generative conditions and processes and to incorporate this information into the training process, contrasting with the current trend of prioritizing the accumulation of large volumes of creation data. 
Overall, our work offers a comprehensive framework for the theoretical exploration, practical evaluation, and enhancement of creativity in AI, effectively bridging theoretical analysis with practical application.

\section{Result Overview}
\label{sec:takeaway}
In an effort to establish a concrete framework for exploring creativity in artificial intelligence, Section \ref{sec:relative_creativity} introduces the concept of Relative Creativity (see Definition \ref{def:1}). This concept, mirroring the Turing Test's comparative approach to intelligence assessment, suggests that an AI model can be deemed as creative as a hypothetical human creator if its outputs are indistinguishable from those of the creator, as judged by a designated evaluator.
To facilitate the theoretical exploration of AI's creativity, Section \ref{sec:method} presents Statistical Creativity (Theorem \ref{thm:1}), which grounds the notion of relative creativity by comparing it to existing human creators.
Furthermore, the theoretical analysis of the training process in Section~\ref{sec:stat_relative_creativity_training} shows that AI has the potential for creativity comparable to humans, provided it can effectively assimilate the conditional data produced by human creators without neglecting the generative condition of creations (Corollary ~\ref{coro:4}). This theoretical result reduces the debate on AI's creativity to the question of collecting and fitting a massive amount of conditional data  with creative qualities.
Our exploration into the theoretical aspects of AI creativity yields valuable insights for practical model evaluation and training. In Section \ref{sec:stat_relative_creativity_autoregressive} and Section \ref{sec:stat_relative_creativity_prompt}, we study the form of statistical creativity in autoregressive models and prompt-conditioned autoregressive models. The derived concept of Prompt-Contextualized Autoregressive Statistical Creativity (Corollary \ref{coro:3}) provides insights into the practical evaluation of LLMs' creative capabilities.
Additionally, our theoretical findings yield actionable training guidelines that highlight the critical role of generation conditions of creations in fostering AI creativity (Remark~\ref{remark:loss}).
We strive for this paper not only to offer theoretical perspectives but to guide the discourse on AI creativity, advocating for relative evaluations to spur empirical research and establish a framework for assessing and fostering the creative capabilities of AI models.
We summarize our contributions as follows.
\begin{tcolorbox}[colback=orange!5!white,colframe=orange!75!black]
\begin{itemize}[leftmargin=1px]

\item 
\textbf{Theoretical Result on AI's Creativity:} 
We theoretically show that AI can be as creative as humans under the condition that AI can properly fit the sufficient amount of conditional data created by human creators. This implies that AI, through its ability to learn from historical data, has the potential to evolve into an entity capable of generating novel, creative outputs. These outputs, while not currently existing, are likely to be generated by a new human creator in the future.

\item 
\textbf{Introduction of Relative and Statistical Creativity:} 
We present the concept of Relative Creativity, a notion for understanding the creative capabilities of artificial intelligence. This concept is defined from a comparative perspective, eschewing absolute definitions in favor of a relative understanding of creativity. To further facilitate theoretical explorations, we introduce Statistical Creativity, a means for assessing whether and to what extent an AI model achieves creativity by comparing its ability to  those of a specific human population.

\item 
\textbf{Integration of Subjectivity:} 
The introduced notions of creativity acknowledges the inherent subjectivity in creativity and incorporates the subjectivity into the comparison process, akin to how the Turing Test assesses intelligence through comparison rather than fixed definitions. The subjective nature of creativity is crystallized in the choice of the anchor population, allowing the study of AI creativity to maintain a degree of objectivity.

\end{itemize}
\end{tcolorbox}

\begin{tcolorbox}[colback=orange!5!white,colframe=orange!75!black]
\begin{itemize}[leftmargin=1px]
\item 
\textbf{Creativity Measurement of Contemporary AI Models:} Delving into autoregressive models, we propose a practical measure of statistical creativity for autoregressive models with next-token prediction. This measure can be applied to generative models with the prompting setup, ensuring its applicability to cutting-edge Large Language Models (LLMs).

\item 
\textbf{Practical Training Guidelines:} 
Theoretical analysis of the training procedure emphasizes the importance of collecting the generative conditions of creation data, in contrast to the prevalent approach of simply accumulating large datasets. And it suggests that fitting a substantial amount of conditional data, without marginalizing out the generative conditions, is crucial for the emergence of creative abilities.
\end{itemize}
\end{tcolorbox}
\section{Relative Creativity}
\label{sec:relative_creativity}

\subsection{What is Relative Creativity?}
\label{sec:what_is_relative_creativity}
Diverging from traditional methods setting an absolute threshold or checklist to determine if AI can be deemed creative, that necessitates a foray into the contentious task of defining creativity universally, we shift the study by proposing an creativity notion for AI called \textit{\textbf{Relative Creativity}}. Relative creativity sidesteps the difficulties associated with defining creativity in absolute terms, mirroring the Turing Test's approach to intelligence evaluation~\citep{turing2009computing}. Note, the Turing Test eschews absolute definitions of intelligence, instead opting for a relative metric that contrasts machine behavior with human responses in conversational scenarios.  
In the assessment of creativity, an AI model is deemed ``relatively creative'' if it can generate creations that are indistinguishable from those of a hypothetical yet plausible human creator, as judged by an evaluator. 
The way in which this notion of creativity is ``relative'' is that it depends on the individual to whom the entity is being compared. 
For instance, if we regard 19th-century painting artists as creative, then we might assume that the hypothetical creator is derived from the distribution of such artists. If the creations generated by the AI, when conditioned on the biographical data or characteristics of this hypothetical creator, are indistinguishable from what that human would have created, then we can conclude that the AI model is as creative as the painting artists from the 19th century.
Relative creativity distinguishes itself through acknowledges the inherently subjective facets of creativity—such as originality, divergent thinking, and problem-solving skills—these elements are integrated into the anchor selection process. Because the subjectivity of creativity is crystallized into the choice of the anchor human, relative creativity leaves the study of AI creativity to be objective.
Regardless of whether the creative outputs of a specific individual or group satisfy the varied, subjective standards for what constitutes creativity, this notion reframes the debate, enabling a focused, empirical investigation into whether an AI model can replicate the creative capacities of a predefined human benchmark.

\subsection{Notation and Formal Definition}
\label{sec:relative_creativity_definition}
\noindent\textbf{Preliminary Notations.}
Imagine if the task is to write poems.
Let $\Xcal$ represent a finite set of poems, denoted by $|\Xcal|<\infty$, where each individual poem in the set is given as $x \in \Xcal$. Consider a generative model $q$ that receives a specific form of information $I\in\mathcal I$. We assume that $I$ consists of partial information pertinent to poets, such as personal background, artistic education, societal and historical Context, etc.
The model outputs a creation according to the conditional distribution $q(\cdot | I)$. This format of conditional generation is widely adapted by current practical models~\citep{rombach2021highresolution, brown2020language}. let $C$ be a set containing complete information about poets, with each poet's data represented by $c \in C$. While $c$ offers a holistic view of a creator, $\I$ only provides a subset to prevent the AI from merely duplicating creations from $c$. Notably, $c$ includes poems linked to the given poem, whereas $\I$ purposely omits these. For example, $\I$ might include AI-generated or AI-engineer-designed synthetic data, such as imagined profiles, background details, simulated upbringing environments, cognitive patterns and experiences.

\noindent\textbf{Probability Distribution and Evaluator Function.}
Define $\Dcal_{C}$ as a probability distribution over $C$. This set, $C$, is versatile, covering all potential creators, whether they exist now, in the past, or in the future. An evaluation function $L$ determines if the generative model $q$ mirrors a particular human creator defined by the information $c$. Specifically, $L(q, c) = 0$ indicates successful emulation, while $L(q, c) = 1$ denotes a failure.

\noindent\textbf{Defining AI Creativity.}
Let $\bc \sim \Dcal_{C}$ stand for a hypothetical creator, one not grounded in reality, from the creator distribution. Here, $\Mcal(\bc)=q(\cdot\mid \bI)$. Given these components, we outline AI creativity as the AI model's capability to emulate a new, plausible, yet non-existent human creator by given biography of the virtual creator, evaluated by $L$. This concept is formalized in the subsequent definition of \textit{relative creativity}:

\begin{definition}[\textbf{Relative Creativity}] \label{def:1}
An AI model, denoted as $\mathcal{M}$, achieves $\delta$-creativity (with respect to evaluator $L$ under creator distribution $\mathcal{D}_{C}$)  if it is indistinguishable from a plausible, yet non-existent human creator to the degree where $L(\Mcal(\bc),\bc)=0$ with a probability of at least $1 - \delta$ for $\bc \sim \Dcal_{C}$.
\end{definition}

\begin{remark} \label{remark:1}
Relative creativity compares the creative abilities of AI and a hypothetical human creator by scrutinizing the creations of the creator and the AI model's results—derived from reasoning over a human creator's biography. In doing so, we leverage the same informational foundation—the human's life history and personalized knowledge—to construct a direct comparison of creativity between the AI and the hypothetical human. Relative creativity acknowledges the inherently subjective nature of creativity, which encompasses aspects like originality, divergent thinking, and problem-solving skills. These subjective elements are considered in the process of choosing the human benchmark for comparison.
\end{remark}

\section{Statistical Creativity}
\label{sec:method}
In this section, we introduce the Statistical Creativity to enable the theoretical exploration of AI's creativity. This approach assesses whether and to what extent a model achieves creativity, basing the comparative assessment on observable human creators.

\subsection{Definition of Statistical Creativity}
\label{sec:stat_relative_creativity}
Due to the agnostic nature of the authentic distribution of hypothetical human creators, in this section, we consider Statistical Creativity, an approach designed to assess the indistinguishability between the creative abilities of an AI model and those of observable human creators. The human creators are represented as $(c_i)_{i=1}^n \sim (\Dcal_{C})^{\otimes n}$, $c_i$ is independent and identically sampled from creator distribution $\Dcal_{C}$. Formally, the term $ E_{0}(q)$ is defined as:
\begin{align}
E_{0}(q)=\frac{1}{n}\sum_{i=1}^n L(q(\cdot\mid \Ii),c_i).
\end{align}
A low $ E_0(q) $ value for a generative model signifies its high resemblance to the majority of creators  $ c_i $. In these instances, it becomes challenging for the evaluator to distinguish between the outputs of the model $q(\cdot\mid \Ii) $ and the human creators $ c_i $.

To facilitate the adjustment from theoretical creators (as defined in Definition~\ref{def:1}) to actual creators, whose data is readily obtainable, we present the Theorem \ref{thm:1}. This theorem explicitly outlines the conditions under which an AI model can be deemed as exhibiting $\delta$-creativity in relation to a specific group of human creators, as assessed by evaluator $L$.
Specifically, if the term $ E_{0}(q) < \delta $ when evaluated over a sufficiently large sample set $\left( n \ge\frac{\ln(1/\delta')}{2(\delta-E_{0})^{2}} \right)$, then the AI can be regarded as $\delta$-creative as humans from $\mathcal{D}_{C}$.
\begin{theorem}[\textbf{Statistical Creativity}] \label{thm:1}
Suppose we have a positive integer $n\in \NN_+$, and positive real numbers $\delta,t>0$. Let $\Mcal(c)=q(\cdot\mid I[c])$ be an AI model. If this model satisfies $E_{0}(q)<\delta$ and $n \ge\frac{\ln(1/t)}{2(\delta-E_{0})^{2}}$, then the AI model $\Mcal$  is $\delta$-creativity (w.r.t. $L$ under $\Dcal_{C}$), with probability at least $1-t$ over the draw of $(c_i)_{i=1}^n \sim (\Dcal_{C})^{\otimes n}$.
\end{theorem}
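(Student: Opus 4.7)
The plan is to recognize that the random variables $Z_i := L(q(\cdot \mid I[c_i]), c_i)$ are i.i.d.\ Bernoulli-type random variables (each taking values in $\{0,1\}$) when $(c_i)_{i=1}^n \sim (\mathcal{D}_C)^{\otimes n}$, since the map $c \mapsto L(q(\cdot \mid I[c]), c)$ is deterministic once the model $q$ is fixed. Let $\mu := \mathbb{E}_{\bar c \sim \mathcal{D}_C}[L(q(\cdot \mid I[\bar c]), \bar c)]$; this is exactly the probability that the model fails the indistinguishability test on a fresh hypothetical creator. By Definition~\ref{def:1}, $\mathcal{M}$ is $\delta$-creative if and only if $\mu \leq \delta$. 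The empirical quantity $E_0(q) = \frac{1}{n}\sum_{i=1}^n Z_i$ is the standard empirical mean estimator of $\mu$, so the whole theorem reduces to a one-sided concentration argument.

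Next I would apply Hoeffding's inequality in its one-sided form. Because $Z_i \in [0,1]$ and the $Z_i$ are independent, for any $\epsilon > 0$,
\begin{equation*}
\Pr\bigl[\mu - E_0(q) \geq \epsilon\bigr] \;\leq\; \exp(-2 n \epsilon^2).
\end{equation*}
Setting $\epsilon := \delta - E_0(q)$, which is strictly positive by the hypothesis $E_0(q) < \delta$, the event $\{\mu > \delta\}$ is contained in $\{\mu - E_0(q) > \delta - E_0(q)\}$, so
\begin{equation*}
\Pr\bigl[\mu > \delta\bigr] \;\leq\; \exp\!\bigl(-2 n (\delta - E_0)^2\bigr).
\end{equation*}

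Then I would plug in the sample-size hypothesis $n \geq \frac{\ln(1/t)}{2(\delta - E_0)^2}$, which rearranges to $2n(\delta - E_0)^2 \geq \ln(1/t)$ and hence $\exp(-2n(\delta-E_0)^2) \leq t$. This gives $\Pr[\mu > \delta] \leq t$, i.e., with probability at least $1-t$ over the draw of $(c_i)_{i=1}^n$, the true risk $\mu$ satisfies $\mu \leq \delta$. Invoking Definition~\ref{def:1} finishes the proof.

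There is no real obstacle here: the argument is essentially a textbook application of Hoeffding once one notices that $E_0$ is an unbiased estimator of the $\delta$-creativity risk. The only point worth stating carefully is why the $Z_i$ are i.i.d.\ and bounded in $[0,1]$ — this relies on the assumption that $L$ returns values in $\{0,1\}$ and that $I[\cdot]$ and $q(\cdot \mid \cdot)$ are fixed deterministic maps so that randomness enters only through $c_i \sim \mathcal{D}_C$. If one wanted a two-sided bound (e.g.\ also controlling $E_0(q)$ from below), a union bound would double the failure probability, but the theorem as stated only needs the one-sided direction $\mu \leq E_0(q) + (\delta - E_0(q))$.
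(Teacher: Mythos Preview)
Your identification of $\delta$-creativity with the condition $\mu := \EE_{\bc\sim\Dcal_C}[L(q(\cdot\mid I[\bc]),\bc)] \le \delta$ is correct (since $L\in\{0,1\}$ forces $\mu=\Pr[L=1]$), and it makes your route cleaner than the paper's own argument, which inserts a Markov-inequality step before Hoeffding and then concludes via $L<1\Rightarrow L=0$. That Markov--Hoeffding template is what the paper reuses in the proofs of Theorem~\ref{thm:2} and the later corollaries, where the surrogate upper bound on $L$ is no longer $\{0,1\}$-valued; for Theorem~\ref{thm:1} in isolation your direct route is the simpler one.

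However, there is a genuine gap in the execution. The quantity $\mu$ is deterministic---it is the population risk of the \emph{fixed} model $q$---so the event $\{\mu>\delta\}$ has probability exactly $0$ or $1$, and the display $\Pr[\mu>\delta]\le \exp\bigl(-2n(\delta-E_0)^2\bigr)$ is not a well-formed bound: its left side is a constant while its right side is random through $E_0$. The root cause is that you substituted a \emph{random} $\epsilon:=\delta-E_0(q)$ into Hoeffding's inequality, which is only valid for deviations fixed prior to the draw. The repair is immediate and stays within your framework: apply Hoeffding with the deterministic deviation $\epsilon_0:=\sqrt{\ln(1/t)/(2n)}$ to obtain $\mu\le E_0+\epsilon_0$ with probability at least $1-t$; on that event, the sample-size hypothesis $n\ge \ln(1/t)\big/\bigl(2(\delta-E_0)^2\bigr)$ rearranges to $\epsilon_0\le \delta-E_0$, giving $\mu\le E_0+(\delta-E_0)=\delta$. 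This is exactly the paper's concentration step, minus the Markov detour.
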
 

\begin{proof}
The proof is presented in Appendix \ref{app:proof}.
\end{proof}


\begin{remark} 
In this theorem, we theoretically demonstrate that AI can be as ($\delta$-)creative as humans, provided it can assimilate a specified amount of data ($n$) from human creators.
Additionally, moving beyond a ``creative or not'' categorization, statistical creativity presents a sophisticated perspective where creativity is characterized by its deviation, $\delta$,  to a pre-selected existing human creators as benchmarks. 
Crucially, statistical creativity does not require an AI to perfectly replicate human creators. Instead, it highlights the importance of achieving a certain level of similarity that is appreciable through the lens of the evaluator, represented by $L$. 
When a human assumes the role of the evaluator, a creatively successful AI is expected to as skillful as a novel creator, as perceived from the human evaluator's perspective.
\end{remark}

\subsection{Statistical Creativity for Autoregressive Model}
\label{sec:stat_relative_creativity_autoregressive}
In this section, we delve into applying statistical creativity into autoregressive models, which are fundamental to the contemporary generative AI models. We explore the specific form of statistical creativity in this context, thereby enhancing our understanding of it within next-token prediction mechanisms.

Our proposition is straightforward: if an AI can generate sequences (like poems or stories) that mirror the works of a group of human artists, it demonstrates a level of creativity comparable to that group. To quantify this assessment, we introduce the metric $E_1(q)$. This metric estimates the indistinguishability between the creative abilities of an autoregressive model and those of human creators by measuring the log-likelihood of next-token predictions across diverse pairs of creators and their creations.
\begin{align}
E_1(q)=-\frac{1}{n}\sum_{i=1}^n \frac{1}{r(c_{i})} \sum_{t=1}^T \log q(x_{i} ^{(t)}\mid x^{(t-\omega:t-1)}_{i},I[c_{i}]).
\end{align}
In this formula, the pairs $(x_i,c_i) _{i=1}^{n}$ are drawn independently and uniformly from $\Dcal$, wherein $\Dcal(x,c) = \Dcal_C(c)\cdot p(x\mid c)$. 
The term  $r(c_i)= \tau+H[ p(\cdot\mid c_{i})]$ is of interest, where $p(\cdot\mid c_{i})$ delineates the authentic (yet unknown) distribution of creations for the creator $c_{i}$. And $H[ p(\cdot\mid c_{i})]$ signifies the entropy of $p(\cdot\mid c_{i})$. The positive constant $\tau$ sets a threshold, details of which will be delved into later. 
Breaking it down further, the creation $x$ consists of $T$ tokens. Define $T\ge 1$ and represent $x ={\{x^{(t)}\}}_{t=1}^T$. The term $q (x \mid  I[c_i])=\prod_{t=1}^T q(x^{(t)} \mid x^{(t-\omega:t-1)},I[c_i])$ is predicated on a context window size of $\omega\ge 0$. For specific conditions where $t=1$ or $\omega=0$, the expression simplifies to $q(x^{(t)} \mid I[c_i])$. Likewise, for $t<\omega$, the notation changes to ${(t-\omega:t-1)} \triangleq {(1:t-1)}$. 

\begin{remark}
It's worth noting that by setting $T=1$, the non-autoregressive structure is retained, preserving the metric's compatibility. In addition, an intriguing aspect of $E_1(q)$ is the term $\frac{1}{r(c_{i})}$. This term plays a pivotal role in ensuring that if the entropy in the creation process by a creator $c_i$ is vast, the corresponding log-likelihood receives lesser weight. This adjustment effectively captures the inherent unpredictability and diversity characterizing human creativity.
\end{remark}

Intuitively, the evaluator $L$ is unable to discern between an AI creator $q$ and an actual creator $p$ if the KL divergence between the two remains minimal. To crystallize this concept, we frame it as the following assumption:
\begin{assumption} \label{assumption:1}
There exists a positive threshold $\tau$ such that $L(q(\cdot\mid \I),c)=0$ whenever $\kl{p(\cdot\mid c)}{q(\cdot\mid \I)} < \tau$, for any $c \in C$. 
\end{assumption}
Within this context, $\tau>0$ stands as the threshold fulfilling the condition stipulated in Assumption \ref{assumption:1}. Furthermore, we introduce $\rmin \in \mathbb{R}^{+}$ as the least positive real number ensuring $\rmin\le r(c)$ with near certainty for $c\sim \Dcal_C$. 
The above assumption delineates the conditions under which an AI is deemed statistically creative. With the condition, the following theorem presents the statistical creativity in the next-token prediction setup.
\begin{theorem}[\textbf{Autoregressive Statistical Creativity}] \label{thm:2}
Provided Assumption \ref{assumption:1} remains valid, and for a  model $\Mcal$ such that its negative log likelihood $ -\log q(x\mid I[c])\le M$ nearly always over $ (x,c)\sim \Dcal$, where $M\in \mathbb{R}^{+}$. If $n\in \NN_+$, $\delta,t>0$, and $\Mcal$ is an AI model where $E_{1}<\delta$ and $n \ge\frac{M^{2}\ln(1/t)}{2\rmin^{2}(\delta-E_{1})^{2}}$, then the model $\Mcal$ is deemed $\delta$-creativity (with respect to $\mathcal{D}_{C}$), with a probability at least $1-t$ across the sampling of $ (x_i,c_i) _{i=1}^{n}\stackrel{i.i.d.}{\sim}\Dcal$.
\end{theorem}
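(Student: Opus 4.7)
The goal is to show directly that $\Pr_{\bar c\sim\Dcal_C}[L(q(\cdot\mid I[\bar c]),\bar c)=1]\le \delta$ with probability at least $1-t$ over the draw of $(x_i,c_i)_{i=1}^n$; by Definition~\ref{def:1} this is exactly $\delta$-creativity, so no separate appeal to Theorem~\ref{thm:1} is required. The argument will proceed by a deterministic pointwise domination followed by a one-sided Hoeffding bound applied to the i.i.d.\ summands of $E_1(q)$.

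The central pointwise step is to verify that for every $c\in C$,
\[
L(q(\cdot\mid I[c]),c)\;\le\;\frac{H[p(\cdot\mid c)]+\kl{p(\cdot\mid c)}{q(\cdot\mid I[c])}}{r(c)}.
\]
This inequality is where the specific choice $r(c)=\tau+H[p(\cdot\mid c)]$ is tailored to Assumption~\ref{assumption:1}: if $\kl{p(\cdot\mid c)}{q(\cdot\mid I[c])}<\tau$, the assumption forces $L=0$ while the right-hand side is nonnegative; if $\kl{p(\cdot\mid c)}{q(\cdot\mid I[c])}\ge\tau$, the numerator is at least $\tau+H[p(\cdot\mid c)]=r(c)$, so the right-hand side is at least $1\ge L$. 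Taking expectation over $c\sim\Dcal_C$ and using the identity $\mathbb{E}_{x\sim p(\cdot\mid c)}[-\log q(x\mid I[c])]=H[p(\cdot\mid c)]+\kl{p(\cdot\mid c)}{q(\cdot\mid I[c])}$ identifies the right-hand side as $\mu\coloneqq\mathbb{E}_{(x,c)\sim\Dcal}[-\log q(x\mid I[c])/r(c)]$, which is precisely the population mean of each summand $V_i=-\log q(x_i\mid I[c_i])/r(c_i)$ of $E_1(q)$.

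With the reduction $\mathbb{E}_{\bar c\sim\Dcal_C}[L(q(\cdot\mid I[\bar c]),\bar c)]\le \mu$ secured, the last step is to apply Hoeffding's inequality to the i.i.d.\ variables $V_1,\dots,V_n$, which satisfy $0\le V_i\le M/\rmin$ almost surely by the log-likelihood bound and the definition of $\rmin$. Setting the deviation tolerance to $\delta-E_1(q)>0$ yields $\Pr(\mu-E_1(q)>\delta-E_1(q))\le\exp(-2n\rmin^2(\delta-E_1)^2/M^2)$, which is at most $t$ exactly under the prescribed sample-size hypothesis $n\ge M^2\ln(1/t)/(2\rmin^2(\delta-E_1)^2)$. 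Combining this with the pointwise bound gives $\mathbb{E}_{\bar c}[L]\le\mu\le\delta$ with probability at least $1-t$, establishing $\delta$-creativity.

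The main obstacle is engineering the pointwise inequality so that it correctly exploits the structural form of $r(c)$; concretely, one must verify the algebraic fact that $(H+\mathrm{KL})/(\tau+H)\ge 1$ precisely when $\mathrm{KL}\ge\tau$, which explains why $r$ is defined with the additive $\tau$ term rather than as $H[p(\cdot\mid c)]$ alone. The Hoeffding step and the bookkeeping of the $\rmin$-scaled range $M/\rmin$ are otherwise routine once this reduction is in place.
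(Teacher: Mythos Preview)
Your proposal is correct and tracks the paper's proof closely: both reduce to $\EE_{c}[L]\le \EE_{(x,c)}[-\log q(x\mid I[c])/r(c)]$ (you via a two-case check on whether $\kl{p}{q}\lessgtr\tau$, the paper via a hinge surrogate $\one\{\tau-\mathrm{KL}\le 0\}\le\max(0,1-(\tau-\mathrm{KL})/r(c))$ followed by an $A+B$ decomposition that collapses to the same bound once $r(c)=\tau+H$), and then apply Hoeffding with range $M/\rmin$. The only substantive difference is the final step: you use $\Pr_{\bar c}[L=1]=\EE_{\bar c}[L]\le\mu\le\delta$ directly, whereas the paper invokes Markov's inequality to get $L(q,\bar c)<\delta^{-1}\EE_c[L]\le 1$ with probability $1-\delta$ and then uses $L\in\{0,1\}$; your route is valid and slightly cleaner since for a $\{0,1\}$-valued loss the Markov step is redundant.
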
 
\begin{proof}
The proof is presented in Appendix \ref{app:proof}.
\end{proof}

\begin{remark} \label{remark:them2}
Contrasting it with Theorem \ref{thm:1}, Theorem \ref{thm:2} reveals that demonstrating creativity does not necessarily require strictly imitating each creator. Instead, it emphasizes the significance of the log-likelihood in next-token prediction across a wide range of creators. Furthermore, Theorem \ref{thm:2} addresses a critical question: Is it essential to have a large number of samples from each creator to demonstrate a model's statistical creativity? This inquiry is vital, considering the potential bottleneck in creative AI development due to the extensive collection of each creator's works. This theorem provides insight into this issue, indicating that amassing a vast dataset for each creator is not as crucial as previously thought. What is more important, as Theorem \ref{thm:2} highlights, is the creator-creation pairs (i.e., $n$ in $\{(x_i,c_i)\} _{i=1}^n$). This implies that with a sufficiently large number of creators, the sample size per creator can be relatively small, thereby alleviating data collection challenges in developing creative AI models.
\end{remark}

\subsection{Statistical Creativity for Large Language Model}
\label{sec:stat_relative_creativity_prompt} 
The technique of prompting serves as a powerful tool for harnessing the inherent capabilities of generative models, particularly Large Language Models (LLMs)~\citep{bommasani2022opportunities}, which are recognized as exhibiting a certain level of creative potential~\citep{Zhao2023ASO}.
In modeling this prompting setup, we denote $U$ as a set encompassing various contexts of creations. Each context within this set is denoted as $ u $. We define $ \Dcal_U $ as the probability distribution over $U$.
Define $ \Dcal_{U,C}(u,c) = \Dcal_C(c)\Dcal_U(u) $, where $ \{u_i\}_{i=1}^n \stackrel{i.i.d.}{\sim} (\Dcal_U)^{\otimes n} $ symbolizes a sequence of existing context prompts, assumed to be independently and identically distributed. The notation $ \bu \sim \Dcal_U $ corresponds to a possible new context prompt.
To clarify the distinction between the variables $ c $ and $ u $, consider a practical example involving GPT-4~\citep{openai2023gpt4}. Here, the creator's information, such as a biography, is represented by $ c $ and serves as the system prompt, while $ u $ corresponds to the user's input prompt.

To facilitate the measurement of statistical creativity in cutting-edge models, we expand our previously outlined functions to incorporate the variable $ u $. 
Consequently, functions such as $L(\bq,c) $ and $q(x| \I) $ are updated to $L(\bq,u,c) $ and $q(x| u,\I) $, respectively. This modification is consistently applied across related definitions. For instance, $\Mcal(\bc) $ is now redefined as $\Mcal(\bz)=q(\cdot| \bu, \bI) $, where $\bz=(\bu, \bc) $ represents the combined user and system prompts. Building on these modifications, we refine the definition of statistical creativity as follows:

\begin{definition} 
An AI generative model, denoted as $\Mcal$, is termed $\delta$-creativity (w.r.t.  $L$ under $\Dcal_{U,C}$), if it behaves in a manner indistinguishable from a hypothetical (yet plausible) human creator when faced with new context prompts. This is characterized by the condition $ L(\Mcal(\bz),\bz) = 0 $ with a probability of at least $1-\delta$ over the sampling of $(\bu ,\bc )\sim \Dcal_{U,C}$. 
\end{definition}

The aforementioned definition expands upon Definition~\ref{def:1} by integrating it into a prompting setup. Subsequently, we offer results that are analogous to those discussed in Section~\ref{sec:method}. We introduce the metric,
\begin{align}
\quad\quad  E_{2} = \frac{1}{n} \sum_{i=1}^n L(\Mcal(z_{i}), z_{i}), \quad\quad\text{where}\quad (z_{i})_{i=1}^n \sim (\Dcal_{U,C})^{\otimes n}. 
\end{align}

\begin{corollary}[\textbf{Prompt-Contextualized Statistical Creativity}] \label{coro:2}
For a given positive integer $n\in \NN_+$,  $\delta,t>0$, and constants $\delta,t>0$, suppose $\Mcal$ is an AI model satisfying $E_{2}<\delta$ and 
$ n \ge \frac{\ln(1/t)}{2(\delta-E_{2})^{2}}. $
Then, the AI model $\Mcal$ is  $\delta$-creativity under $\Dcal_{U,C}$ with a probability not less than $1-t$ over the sampling of $(z_{i})_{i=1}^n \sim (\Dcal_{U,C})^{\otimes n}$.
\end{corollary}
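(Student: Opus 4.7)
The plan is to observe that Corollary \ref{coro:2} is a direct notational extension of Theorem \ref{thm:1} from the creator-only setting under $\Dcal_C$ to the joint prompt-creator setting under $\Dcal_{U,C}$. The structural argument is identical once we reinterpret each sample point: here each $z_i = (u_i, c_i)$ plays exactly the role that $c_i$ played in Theorem \ref{thm:1}, and $L(\Mcal(z_i), z_i) \in \{0,1\}$ serves as the analogous Bernoulli indicator for failure to emulate on the sampled prompt-creator pair. Because the evaluator's output is already binary-valued and $L$ now takes $z$ as its second argument in a symmetric way to how it took $c$ before, no new probabilistic structure is introduced.

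First I would define the i.i.d. Bernoulli random variables $X_i := L(\Mcal(z_i), z_i)$ for $i = 1, \dots, n$, with $(z_i)_{i=1}^n \sim (\Dcal_{U,C})^{\otimes n}$, and let $\mu := \EE_{\bar z \sim \Dcal_{U,C}}[L(\Mcal(\bar z), \bar z)]$ denote the true failure probability of $\Mcal$ on a fresh prompt-creator pair. By construction $E_2 = \frac{1}{n}\sum_{i=1}^n X_i$ is the empirical mean of these indicators, so $\EE[E_2] = \mu$. The target conclusion ``$\Mcal$ is $\delta$-creativity under $\Dcal_{U,C}$'' unpacks exactly to the statement $\mu \le \delta$, i.e., $\PP[L(\Mcal(\bar z), \bar z) = 0] \ge 1 - \delta$ over $\bar z \sim \Dcal_{U,C}$.

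Next I would invoke Hoeffding's inequality on the $[0,1]$-bounded random variables $X_i$ to obtain, for any $\epsilon > 0$, the one-sided tail bound $\PP[\mu - E_2 \ge \epsilon] \le \exp(-2 n \epsilon^2)$. Plugging in $\epsilon = \delta - E_2$, which is strictly positive by the hypothesis $E_2 < \delta$, and using the assumed lower bound $n \ge \frac{\ln(1/t)}{2(\delta - E_2)^2}$, the right-hand side is at most $t$. Complementing the event, with probability at least $1 - t$ over the draw of $(z_i)_{i=1}^n$ we have $\mu < \delta$, which is exactly $\delta$-creativity under $\Dcal_{U,C}$.

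There is no serious obstacle: the extension from Theorem \ref{thm:1} is notational, not mathematical, because the factorization $\Dcal_{U,C}(u,c) = \Dcal_C(c)\Dcal_U(u)$ and the enlargement of the sample space from $C$ to $U \times C$ preserve the i.i.d. Bernoulli structure that Hoeffding requires. The only point to handle cleanly is that $E_2$ is data-dependent, so $\delta - E_2$ is a random deviation parameter; this is handled exactly as in Theorem \ref{thm:1}, by treating the sample-size inequality as a condition that makes the tail bound hold for the realized value of $E_2$ whenever the hypothesis $E_2 < \delta$ is satisfied on that realization.
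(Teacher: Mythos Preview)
Your proposal is correct and mirrors the paper's own proof, which explicitly says to ``follow all the proof steps of Theorem \ref{thm:1} while replacing $c$ and $\Dcal_C$ by $z=(u,c)$ and $\Dcal_{U,C}$.'' The one substantive difference is that the paper retains the Markov-inequality step from Theorem \ref{thm:1}: it first bounds $L(\Mcal(\bar z),\bar z) < \delta^{-1}\EE_{z}[L(\Mcal(z),z)]$ with probability at least $1-\delta$, then uses Hoeffding and the sample-size condition to push the right-hand side below $1$, and finally invokes $L\in\{0,1\}$ to conclude $L=0$. You instead exploit the Bernoulli identity $\mu = \PP[L=1]$ up front, so that Hoeffding alone yields $\mu < \delta$, which is already the definition of $\delta$-creativity. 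Your route is slightly more elementary (Markov is redundant when $L$ is an indicator), while the paper's detour has the minor advantage that it would still go through verbatim if $L$ were merely $[0,1]$-valued rather than $\{0,1\}$-valued.
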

\begin{proof}
The proof is presented in Appendix \ref{app:proof}.
\end{proof}

\begin{assumption} \label{assumption:2}
There exists $p$ and $\tau>0$ such that $L(q(\cdot\mid u,\I),u,c)=0$ if $\kl{p(\cdot\mid u,c)}{q(\cdot\mid u,\I)} < \tau$, applicable for any $c \in C$ and $u \in U$. 
\end{assumption}
Correspondingly, let's define the following metirc $E_3$:
\begin{align}
    E_3=-\frac{1}{n}\sum_{i=1}^n \frac{1}{r(u_{i},c_{i})} \sum_{t=1}^T \log q(x_{i} ^{(t)}\mid x^{(t-\omega:t-1)}_{i},u_{i},I[c_{i}]),
\end{align}
where the triple  $(x_i,u_{i}, c_i) _{i=1}^{n}\sim\Dcal^{\otimes n}$ with $\Dcal(x,u,c)=  \Dcal_{U,C}(u,c)p(x\mid u,c)$. 
Define $r(u,c)= \tau+H[ p(\cdot\mid u,c_{})]$. Let $\rmin$ be a positive real number such that  $\rmin\le r(u,c)$ almost surely over  $(u,c)\sim \Dcal_{U,C}$. Then, we have the following corollary for prompt-contextualized autoregressive models.

\begin{corollary}[\textbf{Prompt-Contextualized Autoregressive Statistical Creativity}] \label{coro:3}
Let Assumption \ref{assumption:2} hold. Let $q$ be given such that $-\log q(x\mid u,I[c])\le M$ almost surely over  $(x,u,c)\sim \Dcal$ for some  $M>0$. Let $n\in \NN_+$,  $\delta,t>0$, and $\Mcal$ be an AI model such that $E_{3}<\delta$ and   $n \ge\frac{M^{2}\ln(1/t)}{2\rmin^{2}(\delta-E_{3})^{2}} $. Then, the AI model $\Mcal$  is $\delta$-creativity , with probability at least $1-t$ over the draw of    $(x_{i},u_{i},c_i)_{i=1}^n \sim \Dcal^{\otimes n}$.
\end{corollary}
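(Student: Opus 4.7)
The plan is to follow the blueprint of Theorem \ref{thm:2} with the user prompt $u$ simply carried along in every conditional distribution. There are two ingredients: a deterministic pointwise bound that converts the error event $L=1$ into the normalized cross-entropy appearing in $E_{3}$, and a Hoeffding-type concentration step that transfers the empirical $E_{3}$ to its expectation.

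For the deterministic bound, I would first apply Assumption \ref{assumption:2} to obtain, for each $(u,c)$,
\begin{equation*}
L(q(\cdot\mid u, I[c]), u, c) \;\le\; \mathbb{I}\bigl[\kl{p(\cdot\mid u,c)}{q(\cdot\mid u, I[c])} \ge \tau\bigr].
\end{equation*}
The crucial pointwise inequality is then
\begin{equation*}
\mathbb{I}\bigl[\mathrm{KL}\ge \tau\bigr] \;\le\; \frac{H[p(\cdot\mid u,c)] + \kl{p(\cdot\mid u,c)}{q(\cdot\mid u, I[c])}}{\tau + H[p(\cdot\mid u,c)]},
\end{equation*}
verified by case analysis on $\mathrm{KL}<\tau$ (LHS zero) and $\mathrm{KL}\ge\tau$ (RHS $\ge 1$ since the denominator $r(u,c)=\tau+H[p(\cdot\mid u,c)]$ is precisely what the numerator dominates). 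Taking expectation over $(u,c)\sim \Dcal_{U,C}$ and invoking the identity $\mathbb{E}_{x\sim p(\cdot\mid u,c)}[-\log q(x\mid u,I[c])] = H[p(\cdot\mid u,c)] + \kl{p(\cdot\mid u,c)}{q(\cdot\mid u,I[c])}$, the right-hand side in expectation equals $\mathbb{E}[E_{3}]$. Hence $\Pr_{(u,c)\sim \Dcal_{U,C}}[L(\Mcal(\bz),\bz)=1] \le \mathbb{E}[E_{3}]$, and whenever $\mathbb{E}[E_{3}]<\delta$ the model is $\delta$-creative.

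For the concentration step, each summand $Z_{i} = (-\log q(x_{i}\mid u_{i}, I[c_{i}]))/r(u_{i}, c_{i})$ is nonnegative and bounded in $[0, M/\rmin]$ almost surely by the hypotheses $-\log q \le M$ and $r(u,c)\ge \rmin$. One-sided Hoeffding then gives $\Pr[\mathbb{E}[E_{3}] - E_{3} \ge \epsilon] \le \exp(-2n\epsilon^{2}\rmin^{2}/M^{2})$; choosing $\epsilon = \delta - E_{3}>0$, the sample-size hypothesis $n \ge \frac{M^{2}\ln(1/t)}{2\rmin^{2}(\delta-E_{3})^{2}}$ drives this tail below $t$, so with probability at least $1-t$ we have $\mathbb{E}[E_{3}] < \delta$ and hence the claimed creativity guarantee.

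The mildly nontrivial step is the pointwise inequality above; its content is that the specific denominator $r(u,c)$ simultaneously achieves two competing objectives---making the right-hand side dominate the indicator when $\mathrm{KL}\ge\tau$, and keeping $Z_{i}$ bounded above by $M/\rmin$ so that Hoeffding applies with the stated constants. Everything else is a direct replay of the proof of Theorem \ref{thm:2}, with $u$ appearing only as extra conditioning, so no new probabilistic ideas are required.
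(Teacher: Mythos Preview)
Your proposal is correct and follows essentially the same approach as the paper's proof: the pointwise inequality you state is exactly the paper's hinge-loss bound $\one\{\tau-\mathrm{KL}\le 0\}\le \max\bigl(0,\,1-(\tau-\mathrm{KL})/r\bigr)$ specialized to $r=\tau+H$, and the Hoeffding step is identical. Your presentation is slightly more streamlined in that you invoke $\Pr[L=1]=\EE[L]$ directly rather than passing through Markov's inequality, but the content is the same.
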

\begin{proof}
The proof is presented in Appendix \ref{app:proof}.
\end{proof}

\begin{remark}
The corollaries presented above expand upon our previous concept of statistical creativity, illustrating the ways in which this form of creativity can be extended and measured in contemporary autoregressive models with a prompting setup, such as large language models (LLMs). Additionally, these corollaries provide guidelines on the data format, specifically the triple $(x_{i},u_{i},c_i)$, and the necessary volume of data for practical evaluations.
\end{remark}

\section{Emergence of Relative Creativity} 
\label{sec:stat_relative_creativity_training}
The discussions in the preceding sections have concentrated on establishing a framework to define and assess the creativity of AI models. This introduces a critical question: Can AI models attain a level of creativity comparable to humans through training, and if so, how? This section seeks to theoretically elucidate this question, leading to our principal conclusion: AI can be as creative as humans under the condition that it can properly fit a sufficient amount of conditional data generated by human creators.

\subsection{The Loss Preserving Generative Conditions}
We revisit and scrutinize the terms previously introduced in the context of statistical creativity, particularly focusing on terms like $E_2$ and $E_3$. The derivable nature of these terms suggests a pathway towards defining  \textit{Statistical Creativity Loss}, Equation~\eqref{eq:scl}.
The conditional loss underscores the importance of preserving the generative conditions, denoted by $\bu$ and $\bI$, of the creations.
\begin{align} \label{eq:scl}
L_{\bar z}[q]=L(q(\cdot\mid \bu, \bI), \bz)
\end{align}
For the notation, we follow the setup of Section~\ref{sec:stat_relative_creativity_prompt}, where $\bz=(\bu,\bc)$ encapsulating the combined user and system prompts.  We consider a learning algorithm $\Acal$ and denote its output as $q_{S}=\Acal(S)$, given the sample set $S$. This is established under the condition that $-\log q_{S}(x\mid u,I[c])\le M$ almost surely over $v \sim \Dcal$, where $S=(v_{i})_{i=1}^n \sim \Dcal^{\otimes n}$. Besides, for the following discussion, we define $\psi(q,v)=-\frac{1}{r(z)} \log q(x\mid u,I[c])$ where $v=(x,u,c)$.

\begin{remark}
In contrast to the prevalent training paradigm, which emphasizes accumulating increasingly large datasets for model pretraining, the loss function described in Equation~\eqref{eq:scl} explicitly involves the generative processes and conditions.
\end{remark}

\subsection{From Generalization to Emergence of Creativity}
A pivotal aspect of our discussion is centered on the concept of generalization in deep learning~\citep{kawaguchi2022generalization}. We introduce a placeholder notation $Q(t)$ to denote for potential overfitting of the model to its training dataset. The choice to employ a placeholder is strategic, given that the exact metrics and quantifications of overfitting—captured by $Q(t)$—are at the forefront of ongoing research in the realm of generalization bounds.
Note, this modular approach allows for easy integration of future advancements in the study of generalization into our framework, by simply updating the placeholder, $Q(t)$. For clarity, we will also present some potential manifestations of $Q(t)$ based on the prevailing understanding from the study of deep learning generalization.
Concretely, for any $t>0$, we define $Q(t)$ such that with probability at least $1-t$ over an draw of $S=(v_{i})_{i=1}^n \sim \Dcal^{\otimes n}$, the following inequality holds:
\begin{align*}
\EE_{v}[\psi(q_{S},v)]- \frac{1}{m}\sum_{i=1}^{m} \psi(q_{S}, v_{i})
\le \tilde \Ocal\left( \sqrt{\frac{Q(t)}{n}}\right).
\end{align*}

For deep neural networks, several foundational works have explored their complexity. For instance, with the Rademacher complexity bound   \citep{bartlett2002rademacher,mohri2012foundations,shalev2014understanding}, the size-independent sample complexity of neural networks \citep{golowich2018size}  provides 
$$
Q(t)=B^{2} \rho \prod_{j=1}^\rho M_F^2(j)+\ln(1/t),
$$ 
where $\rho$ is the number of layers of the network, $B$ is the upper bound on the Euclidean norm of the network, and $M_F(j)$ is the upper bound on the Frobenius norm of the weight matrix at $j$-th layer. 

In addition, with the recent result in information theory \citep{icml2023kzxinfodl}, we have,
$$
Q(t)=\min_{j \in \{1,\dots,\rho\}} I(X;Z_j|Y)+I(S; \Theta_j)+\ln(1/t),
$$
where $I(X;Z|Y)$ is the conditional mutual information between the network input $X$ and the output  $Z_j$  of $j$-th  layer, conditioned on the target label $Y$. Here, $I(S;\phi_j)$ is the mutual information between $S$ and the set of weight parameters from the first layer to    $j$-th  layer.

Besides, the sample complexity based on robustness \citep{xu2012robustness,kawaguchi2022robustness} indicates,
$$
Q(t)=c(S)^{2}+\Ncal((2\sqrt{n})^{-1},\Vcal,\kappa)+\ln(1/t),
$$
where $\Vcal$ designates a chosen compact space of $v$ relative to metric $\kappa$. The constant $c(S)$ is Lipschitz and defined as: $|\psi(q_{S}, v) - \psi(q_{S}, v')| \le c(S) \kappa(v,v')$ for any pair $(v,v') \in \Vcal$. Notably, $\Ncal((2\sqrt{n})^{-1},\Vcal,\kappa)$ is the $\epsilon$-covering number of $\Vcal$ (as detailed in Definition 1 of \citealp{xu2012robustness}).

Moreover, for  any model configuration, if we chose $\Acal: \Scal \rightarrow \Qcal$ with $|\Qcal| < \infty$, an use of  concentration inequalities (e.g.,  \citealp{kawaguchi2022generalization}) provides
$$
Q(t) = \ln\left(\frac{|\Qcal|}{t}\right).
$$ 

Different from the traditional discussions on model generalization, which often focus on metrics like classification accuracy, Corollary \ref{coro:4} studies the generalization of models training with the statistical creativity loss, Equation~\eqref{eq:scl}. 
\begin{corollary} \label{coro:4}
Let Assumption \ref{assumption:2} hold. Then,
with probability at least $1-\delta$ over  an draw of $S=(v_{i})_{i=1}^n \sim \Dcal^{\otimes n }$ and $\bz\sim \Dcal_{U,C}$, 
\begin{align} \label{eq:1}
L_{\bar z}[q_{S}] <2\delta^{-1}E[q_{S}]+\Ocal\left( \sqrt{\frac{\delta ^{-2}Q(\delta/2)}{n}}\right)\le\frac{2}{\delta\rmin}\tE[q_{S}]+\Ocal\left( \sqrt{\frac{\delta ^{-2}Q(\delta/2)}{n}}\right) .
\end{align}
\end{corollary}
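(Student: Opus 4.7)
The plan is to combine three ingredients in sequence: Assumption \ref{assumption:2} (which turns the zero-one loss $L_{\bar z}[q_S]$ into a KL-divergence threshold), the generalization bound encoded by $Q(t)$, and a Markov-style inequality that converts a bound on expected loss into a high-probability bound for the $\{0,1\}$-valued random variable $L_{\bar z}[q_S]$. The clever bit is an algebraic identity exploiting how $r(z)=\tau+H[p(\cdot\mid u,c)]$ is defined: the entropy in the denominator cancels the entropy that appears when rewriting KL as cross-entropy minus entropy.

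First, I would establish the pointwise bound
\[
\mathbb{1}\!\left[\kl{p(\cdot\mid \bu,\bc)}{q_{S}(\cdot\mid \bu,\bI)} \ge \tau\right] \;\le\; \frac{\kl{p(\cdot\mid \bu,\bc)}{q_{S}(\cdot\mid \bu,\bI)} + H[p(\cdot\mid \bu,\bc)]}{r(\bz)},
\]
which holds by checking the two cases $\kl{\cdot}{\cdot} \ge \tau$ (both sides $\ge 1$, since $r(\bz)=\tau+H$) and $\kl{\cdot}{\cdot} < \tau$ (left side is $0$). Because $\kl{p}{q}+H[p]$ equals the cross-entropy $-\EE_{\bx \sim p}[\log q_S(\bx\mid \bu,\bI)]$, integrating against $\bz \sim \Dcal_{U,C}$ and $\bx \sim p(\cdot\mid \bz)$ collapses the right-hand side exactly to $\EE_v[\psi(q_S,v)]$. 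Coupled with $L_{\bar z}[q_S] \le \mathbb{1}[\kl{p(\cdot\mid \bu,\bc)}{q_S(\cdot\mid \bu,\bI)} \ge \tau]$ from Assumption \ref{assumption:2}, this yields
\[
\EE_{\bz}[L_{\bar z}[q_S]] \;\le\; \EE_v[\psi(q_S,v)].
\]

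Second, I would invoke the generalization bound at confidence level $\delta/2$: with probability at least $1-\delta/2$ over $S \sim \Dcal^{\otimes n}$, $\EE_v[\psi(q_S,v)] \le E[q_S] + \tilde\Ocal(\sqrt{Q(\delta/2)/n})$. Third, conditional on this good $S$, I would apply Markov's inequality to the nonnegative random variable $L_{\bar z}[q_S]$ in $\bz$: $\Pr_{\bz}[L_{\bar z}[q_S] \ge 2\EE_{\bz}[L_{\bar z}[q_S]]/\delta] \le \delta/2$. Chaining the two bounds then gives $L_{\bar z}[q_S] < 2\delta^{-1} E[q_S] + \Ocal(\sqrt{\delta^{-2} Q(\delta/2)/n})$ with probability at least $1-\delta/2$ over $\bz$, and a union bound over the two failure events of total mass $\delta$ delivers the first inequality in the corollary. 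The second inequality is immediate: $\psi(q_S,v_i) = -\log q_S(x_i\mid u_i,I[c_i])/r(z_i) \le -\log q_S(x_i\mid u_i,I[c_i])/\rmin$ almost surely, since $r(z_i)\ge \rmin$ and $-\log q_S \ge 0$ (as $\Xcal$ is finite, $q_S$ is a probability mass function).

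The step I anticipate as the main obstacle is spotting the pointwise inequality in Step~1: one must recognize that the definition $r(z)=\tau+H[p(\cdot\mid z)]$ was crafted precisely so that $(\kl{\cdot}{\cdot}+H)/r$ dominates the indicator, and that integration against the data distribution reproduces exactly the population version of $\psi$. Once this identity is secured, the remainder is routine concentration plus Markov, with the factor $2/\delta$ in the final bound arising from the Markov step and the $\sqrt{\delta^{-2}Q(\delta/2)/n}$ term from rescaling the generalization bound by $2/\delta$.
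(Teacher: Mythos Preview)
Your proposal is correct and follows essentially the same route as the paper: Assumption~\ref{assumption:2} turns $L_{\bar z}$ into a KL-threshold indicator, the choice $r(z)=\tau+H[p(\cdot\mid z)]$ converts that indicator into the expectation $\EE_v[\psi(q_S,v)]$, and then Markov at level $\delta/2$ plus the generalization bound $Q(\delta/2)$ are combined via a union bound. Your pointwise inequality $\one\{\kl{p}{q}\ge\tau\}\le(\kl{p}{q}+H)/r(z)$ is in fact a streamlined restatement of the paper's hinge-surrogate step (the paper writes $\max(0,1-\tfrac{1}{r(z)}(\tau-\kl{p}{q}))$ and then decomposes $\max(0,A+B)\le\max(0,A)+B$, but with $r(z)=\tau+H$ that hinge expression equals exactly $(\kl{p}{q}+H)/r(z)\ge 0$), so the two arguments coincide.
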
 
\begin{proof}
The proof is presented in Appendix \ref{app:proof}.
\end{proof}

\begin{remark}
    Corollary \ref{coro:4} outlines the upper bound of statistical creativity for a model trained with the conditional loss, as described in Equation~\ref{eq:scl}, over the sample set $S$. This shifts the discussion of AI's creativity to focus on its capacity to adequately fit a substantial amount of conditional data from existing human creators. Additionally, it suggests that if AI can effectively assimilate a massive amount of conditional data (without neglecting the generative process), it could act as a novel hypothetical creator, matching the quality of the group of creators upon which it was trained.

These findings point to a new direction in AI creativity research. Previous efforts have concentrated on compiling large datasets in the hope that this would organically lead to creative abilities in AI models. However, our research underscores a critical subtlety: the significance of not overlooking the generative condition/process inherent in these creative works. By integrating and comprehending the generative process within the data fitting process, AI could potentially achieve a level of creativity comparable to humans.
\label{remark:loss}
\end{remark}

\section{Related Works}
\label{sec:related}


\subsection{Definitions of Creativity}
Research into creativity has spanned several decades, with the topic remaining a central point of interest in psychology, cognitive science, and philosophy. While there has been an abundance of research on the subject, there is no universally agreed upon definition. Indeed, literature showcases an array of over a hundred proposed definitions, underscoring the multifaceted essence of creativity \citep{aleinikov00,treffinger96,boden03, elgammal_can_2017}. Within the field of computational creativity, a commonly accepted framework for assessing creativity uses the ``four P's'': namely, \textit{person} (the creator of a work), \textit{press} (the environmental context for a work), \textit{process} (how a work is created), and \textit{product} (the work itself) \citep{jordanous16}. \cite{boden03} presents a triadic criterion for gauging machine creativity, highlighting artifacts or ideas that are ``new, surprising, and valuable.'' Building upon this foundation, \cite{boden03} identifies three nuanced forms of creativity: Combinatorial, Exploratory, and Transformational. This classification melds the \textit{process} and \textit{product} dimensions of the four P's in order to focus on the production of surprising or novel outputs. The previous works consider \textit{absolute} creativity at the level of each creation given each environment without mathematical concreteness. We depart from previous work here. Instead of considering \textit{absolute} creativity at the creation level, we consider the \textit{relative} creativity at the meta-level of parallel universes consisting of humans and AI algorithms. This allows us to avoid  the question of what is creative in an absolute sense, and thus enables us to rigorously define objectives and desired properties via concrete mathematical formula instead of English descriptions.



\subsection{Applications in Vision and Language}
The field of creative image generative models has seen significant growth, raising questions regarding machines' ability to produce creative art. \cite{hertzmann2018can} delved into this, highlighting intersections between computer graphics and artistic innovation. \cite{xu2012fit} introduced creative 3D modeling that aligns with user preferences while ensuring variety. Generative Adversarial Networks (GANs)~\citep{goodfellow2014generative}, used by \cite{elgammal_can_2017}, drive the creation of distinctive artistic styles by maximizing deviations from known styles. \cite{sbai2018design} further this deviation by encouraging models to differ from training set styles. The perspective of creative generation as composition is evident in works by \cite{ge2021creative} and \cite{ranaweera2016exquimo}, emphasizing the integration of detailed elements. \cite{vinker2023concept} build on this by fragmenting personalized concepts into visual elements for innovative reassembly, enriching creative output.

Parallel to advancements in vision, the evolution of language models has spurred inquiries into optimizing data usage to bolster their adaptability across various domains, tasks, and languages~\citep{gururangan2020don,devlin-etal-2019-bert,conneau-etal-2020-unsupervised}. 
Researchers have shown an inclination to deploy language models in deciphering nuances in human communication~\citep{schwartz_personality_2013,wu2022using}. This insight is further harnessed to refine classification models~\citep{hovy_demographic_2015,flek_returning_2020}.
With the increasing popularity of generative models, there has also been an interest in controllable text generation, in which a model's output must satisfy constraints such as politeness~\citep{saha-2022-countergedi,sennrich-etal-2016-controlling}, sentiment~\citep{liu2021mitigating,dathathri2019plug,he2020probabilistic}, or other stylistic constraints. Finally, text style transfer (TST), in which the goal is to transform the style of an input text to a set goal style, has also become a popular task. Style may refer to a range of different text and author specific features including politeness~\citep{madaan-etal-2020-politeness}, formality~\citep{rao-tetreault-2018-dear,briakou-etal-2021-ola}, simplicity~\citep{zhu-etal-2010-monolingual,van_den_bercken-2019-evaluating,weng-2019-unsupervised,cao-etal-2020-expertise}, author~\citep{xu2012paraphrasing,carlson2018evaluating}, author gender~\citep{prabhumoye-etal-2018-style}, and more~\citep{jin-2022-deep}. 
While all these applications seek to apply elements of creativity in generative models, none directly defines creativity or seeks to directly optimize for it. Instead, the focus is on improving models' ability to excel at predefined tasks as proxies for creativity. In contrast, our research pivots on establishing a theoretical foundation for creativity. This framework of creativity naturally includes preceding insights on diversity and the quality of generation. 
We anticipate that our contributions will lay the groundwork for future endeavors, guiding the enhancement of model creativity.

\section{Conclusion}
\label{sec:conclusion}
In this paper, we embarked on an exploration of creativity within the realm of artificial intelligence. We theoretically proved that AI can be as creative as humans, provided it can effectively fit conditional data, including artworks and the corresponding creation conditions and processes, generated by human creators.
Our journey began with the introduction of Relative Creativity, a paradigm-shifting concept that defines the creativity in AI from a comparative standpoint, moving away from absolute standards. This approach acknowledges the inherent subjectivity in creative processes and incorporates it into the selection of comparison anchors, drawing inspiration from the Turing Test’s method for assessing intelligence.
We then introduced Statistical Creativity, focusing on whether AI can replicate the creative outputs of specific observable human groups. This concept not only facilitates the theoretical analysis of AI's creativity but also provides a basis for assessment, thus bolstering the practical applicability of our theoretical results.
On the practical front, aligned with the recent advancements in generative AI models, we have further investigated statistical creativity within the context of autoregressive models with prompting. Our findings reveal practical way to measure the statistical creativity of large language models (LLMs).
From a theoretical viewpoint, through the lens of statistical creativity, our analysis of the AI's training process indicates that by fitting extensive conditional data without marginalizing out the generative conditions, AI can emerge as a hypothetical new creator. This creator, though non-existent, possesses the same creative abilities on par with the human creators it was trained on. 
Therefore, the debate on whether AI can be as creative as human is reduced to the question of its ability to fit a sufficient amount of conditional data without marginalizing generation conditions.
This analysis offers practical guidelines, emphasizing the significance of gathering data on generative conditions and integrating these conditions into the training of models.
In conclusion, this work serves as a theoretical framework for understanding AI creativity. We have not only theoretically answered whether AI can be as creative as humans but also provided practical methodologies for assessing and fostering AI's creative potential.


\vspace{10pt}
\noindent\textbf{Limitations.} 
This paper reduces the question of whether AI can be as creative as human into a question of its ability to fit a massive amount of data. The challenge lies in the availability of sufficient conditional data satisfying the desired level of creativity and the capacity of the AI model to fit those data. This paper does not address whether AI can effectively fit the necessary data. There exists a possibility that AI's limitation in learning the required data volume could hinder its achievement of creativity. We would like to leave the discussion of it in future work. 
Besides, this paper conceptualize relative creativity and proposing a methodology to assess the extent to which an AI model achieves this type of creativity. The development of a comprehensive dataset and the establishment of a benchmark for creativity evaluation are forthcoming endeavors. Additionally, we will test current advanced AI models to evaluate their creative performance. As we progress, we anticipate integrating our methods into the existing evaluation benchmark toolkit, align with the evolving landscape of AI technologies and assessment.


\bibliography{all}

\newcommand{\noopsort}[1]{} \newcommand{\printfirst}[2]{#1}
  \newcommand{\singleletter}[1]{#1} \newcommand{\switchargs}[2]{#2#1}
\begin{thebibliography}{}

\bibitem[Aleinikov et~al., 2000]{aleinikov00}
Aleinikov, A.~G., Kackmeister, S., and Koenig, R. (2000).
\newblock {\em {Creating Creativity: 101 Definitions (what Webster Never Told
  You)}}.
\newblock Alden B. Dow Creativity Center Press, Midland, MI.

\bibitem[Amabile, 1996]{amabile1996creativity}
Amabile, T.~M. (1996).
\newblock {\em Creativity and innovation in organizations}, volume~5.
\newblock Harvard Business School Boston.

\bibitem[Bartlett and Mendelson, 2002]{bartlett2002rademacher}
Bartlett, P.~L. and Mendelson, S. (2002).
\newblock Rademacher and gaussian complexities: Risk bounds and structural
  results.
\newblock {\em Journal of Machine Learning Research}, 3(Nov):463--482.

\bibitem[Boden, 2003]{boden03}
Boden, M.~A. (2003).
\newblock {\em {The Creative Mind: Myths and Mechanisms}}.
\newblock Routledge, London, UK.

\bibitem[Bommasani et~al., 2022]{bommasani2022opportunities}
Bommasani, R., Hudson, D.~A., Adeli, E., Altman, R., Arora, S., von Arx, S.,
  Bernstein, M.~S., Bohg, J., Bosselut, A., Brunskill, E., Brynjolfsson, E.,
  Buch, S., Card, D., Castellon, R., Chatterji, N., Chen, A., Creel, K., Davis,
  J.~Q., Demszky, D., Donahue, C., Doumbouya, M., Durmus, E., Ermon, S.,
  Etchemendy, J., Ethayarajh, K., Fei-Fei, L., Finn, C., Gale, T., Gillespie,
  L., Goel, K., Goodman, N., Grossman, S., Guha, N., Hashimoto, T., Henderson,
  P., Hewitt, J., Ho, D.~E., Hong, J., Hsu, K., Huang, J., Icard, T., Jain, S.,
  Jurafsky, D., Kalluri, P., Karamcheti, S., Keeling, G., Khani, F., Khattab,
  O., Koh, P.~W., Krass, M., Krishna, R., Kuditipudi, R., Kumar, A., Ladhak,
  F., Lee, M., Lee, T., Leskovec, J., Levent, I., Li, X.~L., Li, X., Ma, T.,
  Malik, A., Manning, C.~D., Mirchandani, S., Mitchell, E., Munyikwa, Z., Nair,
  S., Narayan, A., Narayanan, D., Newman, B., Nie, A., Niebles, J.~C.,
  Nilforoshan, H., Nyarko, J., Ogut, G., Orr, L., Papadimitriou, I., Park,
  J.~S., Piech, C., Portelance, E., Potts, C., Raghunathan, A., Reich, R., Ren,
  H., Rong, F., Roohani, Y., Ruiz, C., Ryan, J., Ré, C., Sadigh, D., Sagawa,
  S., Santhanam, K., Shih, A., Srinivasan, K., Tamkin, A., Taori, R., Thomas,
  A.~W., Tramèr, F., Wang, R.~E., Wang, W., Wu, B., Wu, J., Wu, Y., Xie,
  S.~M., Yasunaga, M., You, J., Zaharia, M., Zhang, M., Zhang, T., Zhang, X.,
  Zhang, Y., Zheng, L., Zhou, K., and Liang, P. (2022).
\newblock On the opportunities and risks of foundation models.

\bibitem[Briakou et~al., 2021]{briakou-etal-2021-ola}
Briakou, E., Lu, D., Zhang, K., and Tetreault, J. (2021).
\newblock Ol{\'a}, bonjour, salve! {XFORMAL}: A benchmark for multilingual
  formality style transfer.
\newblock In {\em Proceedings of the 2021 Conference of the North American
  Chapter of the Association for Computational Linguistics: Human Language
  Technologies}, pages 3199--3216, Online. Association for Computational
  Linguistics.

\bibitem[Brown et~al., 2020]{brown2020language}
Brown, T., Mann, B., Ryder, N., Subbiah, M., Kaplan, J.~D., Dhariwal, P.,
  Neelakantan, A., Shyam, P., Sastry, G., Askell, A., et~al. (2020).
\newblock Language models are few-shot learners.
\newblock {\em Advances in neural information processing systems},
  33:1877--1901.

\bibitem[C{\^a}mara~Pereira, 2007]{camara2007creativity}
C{\^a}mara~Pereira, F. (2007).
\newblock {\em Creativity and artificial intelligence: a conceptual blending
  approach}.
\newblock Mouton de Gruyter.

\bibitem[Cao et~al., 2020]{cao-etal-2020-expertise}
Cao, Y., Shui, R., Pan, L., Kan, M.-Y., Liu, Z., and Chua, T.-S. (2020).
\newblock Expertise style transfer: A new task towards better communication
  between experts and laymen.
\newblock In {\em Proceedings of the 58th Annual Meeting of the Association for
  Computational Linguistics}, pages 1061--1071, Online. Association for
  Computational Linguistics.

\bibitem[Carlson et~al., 2018]{carlson2018evaluating}
Carlson, K., Riddell, A., and Rockmore, D. (2018).
\newblock Evaluating prose style transfer with the bible.
\newblock {\em Royal Society open science}, 5(10):171920.

\bibitem[Conneau et~al., 2020]{conneau-etal-2020-unsupervised}
Conneau, A., Khandelwal, K., Goyal, N., Chaudhary, V., Wenzek, G., Guzm{\'a}n,
  F., Grave, E., Ott, M., Zettlemoyer, L., and Stoyanov, V. (2020).
\newblock Unsupervised cross-lingual representation learning at scale.
\newblock In {\em Proceedings of the 58th Annual Meeting of the Association for
  Computational Linguistics}, pages 8440--8451, Online. Association for
  Computational Linguistics.

\bibitem[Dathathri et~al., 2019]{dathathri2019plug}
Dathathri, S., Madotto, A., Lan, J., Hung, J., Frank, E., Molino, P., Yosinski,
  J., and Liu, R. (2019).
\newblock Plug and play language models: A simple approach to controlled text
  generation.
\newblock {\em arXiv preprint arXiv:1912.02164}.

\bibitem[Devlin et~al., 2019]{devlin-etal-2019-bert}
Devlin, J., Chang, M.-W., Lee, K., and Toutanova, K. (2019).
\newblock {BERT}: Pre-training of deep bidirectional transformers for language
  understanding.
\newblock In {\em Proceedings of the 2019 Conference of the North {A}merican
  Chapter of the Association for Computational Linguistics: Human Language
  Technologies, Volume 1 (Long and Short Papers)}, pages 4171--4186,
  Minneapolis, Minnesota. Association for Computational Linguistics.

\bibitem[Elgammal et~al., 2017]{elgammal_can_2017}
Elgammal, A., Liu, B., Elhoseiny, M., and Mazzone, M. (2017).
\newblock {CAN}: {Creative} {Adversarial} {Networks}, {Generating} "{Art}" by
  {Learning} {About} {Styles} and {Deviating} from {Style} {Norms}.
\newblock arXiv:1706.07068 [cs].

\bibitem[Flek, 2020]{flek_returning_2020}
Flek, L. (2020).
\newblock Returning the {N} to {NLP}: {Towards} {Contextually} {Personalized}
  {Classification} {Models}.
\newblock In {\em Proceedings of the 58th {Annual} {Meeting} of the
  {Association} for {Computational} {Linguistics}}, pages 7828--7838, Online.
  Association for Computational Linguistics.

\bibitem[Foster, 2022]{foster2022generative}
Foster, D. (2022).
\newblock {\em Generative deep learning}.
\newblock " O'Reilly Media, Inc.".

\bibitem[Franceschelli and Musolesi, 2023]{franceschelli2023creativity}
Franceschelli, G. and Musolesi, M. (2023).
\newblock On the creativity of large language models.
\newblock {\em arXiv preprint arXiv:2304.00008}.

\bibitem[Ge et~al., 2021]{ge2021creative}
Ge, S., Goswami, V., Zitnick, L., and Parikh, D. (2021).
\newblock Creative sketch generation.
\newblock In {\em International Conference on Learning Representations}.

\bibitem[Golowich et~al., 2018]{golowich2018size}
Golowich, N., Rakhlin, A., and Shamir, O. (2018).
\newblock Size-independent sample complexity of neural networks.
\newblock In {\em Conference On Learning Theory}, pages 297--299. PMLR.

\bibitem[Goodfellow et~al., 2014]{goodfellow2014generative}
Goodfellow, I., Pouget-Abadie, J., Mirza, M., Xu, B., Warde-Farley, D., Ozair,
  S., Courville, A., and Bengio, Y. (2014).
\newblock Generative adversarial nets.
\newblock {\em Advances in neural information processing systems}, 27.

\bibitem[Gururangan et~al., 2020]{gururangan2020don}
Gururangan, S., Marasovi{\'c}, A., Swayamdipta, S., Lo, K., Beltagy, I.,
  Downey, D., and Smith, N.~A. (2020).
\newblock Don’t stop pretraining: Adapt language models to domains and tasks.
\newblock In {\em Proceedings of the 58th Annual Meeting of the Association for
  Computational Linguistics}, pages 8342--8360.

\bibitem[He et~al., 2020]{he2020probabilistic}
He, J., Wang, X., Neubig, G., and Berg-Kirkpatrick, T. (2020).
\newblock A probabilistic formulation of unsupervised text style transfer.
\newblock {\em arXiv preprint arXiv:2002.03912}.

\bibitem[Hertzmann, 2018]{hertzmann2018can}
Hertzmann, A. (2018).
\newblock Can computers create art?
\newblock In {\em Arts}, volume~7, page~18. MDPI.

\bibitem[Hoeffding, 1963]{hoeffding1963probability}
Hoeffding, W. (1963).
\newblock Probability inequalities for sums of bounded random variables.
\newblock {\em Journal of the American Statistical Association}, pages 13--30.

\bibitem[Hovy, 2015]{hovy_demographic_2015}
Hovy, D. (2015).
\newblock Demographic {Factors} {Improve} {Classification} {Performance}.
\newblock In {\em Proceedings of the 53rd {Annual} {Meeting} of the
  {Association} for {Computational} {Linguistics} and the 7th {International}
  {Joint} {Conference} on {Natural} {Language} {Processing} ({Volume} 1: {Long}
  {Papers})}, pages 752--762, Beijing, China. Association for Computational
  Linguistics.

\bibitem[Jin et~al., 2022]{jin-2022-deep}
Jin, D., Jin, Z., Hu, Z., Vechtomova, O., and Mihalcea, R. (2022).
\newblock {Deep Learning for Text Style Transfer: A Survey}.
\newblock {\em Computational Linguistics}, 48(1):155--205.

\bibitem[Jordanous, 2016]{jordanous16}
Jordanous, A. (2016).
\newblock {Four PPPPerspectives on Computational Creativity in Theory and in
  Practice}.
\newblock {\em Connection Science}, 28(2):294--216.

\bibitem[Jumper et~al., 2021]{jumper2021highly}
Jumper, J., Evans, R., Pritzel, A., Green, T., Figurnov, M., Ronneberger, O.,
  Tunyasuvunakool, K., Bates, R., {\v{Z}}{\'\i}dek, A., Potapenko, A., et~al.
  (2021).
\newblock Highly accurate protein structure prediction with alphafold.
\newblock {\em Nature}, 596(7873):583--589.

\bibitem[Kawaguchi et~al., 2023]{icml2023kzxinfodl}
Kawaguchi, K., Deng, Z., Ji, X., and Huang, J. (2023).
\newblock How does information bottleneck help deep learning?
\newblock In {\em International Conference on Machine Learning (ICML)}.

\bibitem[Kawaguchi et~al., 2022a]{kawaguchi2022robustness}
Kawaguchi, K., Deng, Z., Luh, K., and Huang, J. (2022a).
\newblock Robustness implies generalization via data-dependent generalization
  bounds.
\newblock In {\em International Conference on Machine Learning}, pages
  10866--10894. PMLR.

\bibitem[Kawaguchi et~al., 2022b]{kawaguchi2022generalization}
Kawaguchi, K., Kaelbling, L.~P., and Bengio, Y. (2022b).
\newblock Generalization in deep learning.
\newblock {\em Mathematical Aspects of Deep Learning}, pages 112--148.

\bibitem[Kirkpatrick, 2023]{kirkpatrick2023can}
Kirkpatrick, K. (2023).
\newblock Can ai demonstrate creativity?
\newblock {\em Communications of the ACM}, 66(2):21--23.

\bibitem[Liu et~al., 2021]{liu2021mitigating}
Liu, R., Jia, C., Wei, J., Xu, G., Wang, L., and Vosoughi, S. (2021).
\newblock Mitigating political bias in language models through reinforced
  calibration.
\newblock In {\em Proceedings of the AAAI Conference on Artificial
  Intelligence}, volume~35, pages 14857--14866.

\bibitem[Madaan et~al., 2020]{madaan-etal-2020-politeness}
Madaan, A., Setlur, A., Parekh, T., Poczos, B., Neubig, G., Yang, Y.,
  Salakhutdinov, R., Black, A.~W., and Prabhumoye, S. (2020).
\newblock Politeness transfer: A tag and generate approach.
\newblock In {\em Proceedings of the 58th Annual Meeting of the Association for
  Computational Linguistics}, pages 1869--1881, Online. Association for
  Computational Linguistics.

\bibitem[Mohri et~al., 2012]{mohri2012foundations}
Mohri, M., Rostamizadeh, A., and Talwalkar, A. (2012).
\newblock {\em Foundations of machine learning}.
\newblock MIT press.

\bibitem[OpenAI, 2023]{openai2023gpt4}
OpenAI (2023).
\newblock Gpt-4 technical report.

\bibitem[Prabhumoye et~al., 2018]{prabhumoye-etal-2018-style}
Prabhumoye, S., Tsvetkov, Y., Salakhutdinov, R., and Black, A.~W. (2018).
\newblock Style transfer through back-translation.
\newblock In {\em Proceedings of the 56th Annual Meeting of the Association for
  Computational Linguistics (Volume 1: Long Papers)}, pages 866--876,
  Melbourne, Australia. Association for Computational Linguistics.

\bibitem[Ranaweera, 2016]{ranaweera2016exquimo}
Ranaweera, W.~L. (2016).
\newblock Exquimo: An exquisite corpse tool for co-creative 3d shape modeling.

\bibitem[Rao and Tetreault, 2018]{rao-tetreault-2018-dear}
Rao, S. and Tetreault, J. (2018).
\newblock Dear sir or madam, may {I} introduce the {GYAFC} dataset: Corpus,
  benchmarks and metrics for formality style transfer.
\newblock In {\em Proceedings of the 2018 Conference of the North {A}merican
  Chapter of the Association for Computational Linguistics: Human Language
  Technologies, Volume 1 (Long Papers)}, pages 129--140, New Orleans,
  Louisiana. Association for Computational Linguistics.

\bibitem[Rombach et~al., 2021]{rombach2021highresolution}
Rombach, R., Blattmann, A., Lorenz, D., Esser, P., and Ommer, B. (2021).
\newblock High-resolution image synthesis with latent diffusion models.

\bibitem[Runco and Jaeger, 2012]{runco2012standard}
Runco, M.~A. and Jaeger, G.~J. (2012).
\newblock The standard definition of creativity.
\newblock {\em Creativity research journal}, 24(1):92--96.

\bibitem[Saha et~al., 2022]{saha-2022-countergedi}
Saha, P., Singh, K., Kumar, A., Mathew, B., and Mukherjee, A. (2022).
\newblock Countergedi: A controllable approach to generate polite, detoxified
  and emotional counterspeech.
\newblock In Raedt, L.~D., editor, {\em Proceedings of the Thirty-First
  International Joint Conference on Artificial Intelligence, {IJCAI-22}}, pages
  5157--5163. International Joint Conferences on Artificial Intelligence
  Organization.
\newblock AI for Good.

\bibitem[Sawyer, 2012]{sawyer2012extending}
Sawyer, K. (2012).
\newblock Extending sociocultural theory to group creativity.
\newblock {\em Vocations and Learning}, 5(1):59--75.

\bibitem[Sbai et~al., 2018]{sbai2018design}
Sbai, O., Elhoseiny, M., Bordes, A., LeCun, Y., and Couprie, C. (2018).
\newblock Design: Design inspiration from generative networks.
\newblock In {\em Proceedings of the European Conference on Computer Vision
  (ECCV) Workshops}, pages 0--0.

\bibitem[Schwartz et~al., 2013]{schwartz_personality_2013}
Schwartz, H.~A., Eichstaedt, J.~C., Kern, M.~L., Dziurzynski, L., Ramones,
  S.~M., Agrawal, M., Shah, A., Kosinski, M., Stillwell, D., Seligman, M.
  E.~P., and Ungar, L.~H. (2013).
\newblock Personality, {Gender}, and {Age} in the {Language} of {Social}
  {Media}: {The} {Open}-{Vocabulary} {Approach}.
\newblock {\em PLOS ONE}, 8(9):e73791.
\newblock Publisher: Public Library of Science.

\bibitem[Sennrich et~al., 2016]{sennrich-etal-2016-controlling}
Sennrich, R., Haddow, B., and Birch, A. (2016).
\newblock Controlling politeness in neural machine translation via side
  constraints.
\newblock In {\em Proceedings of the 2016 Conference of the North {A}merican
  Chapter of the Association for Computational Linguistics: Human Language
  Technologies}, pages 35--40, San Diego, California. Association for
  Computational Linguistics.

\bibitem[Shalev-Shwartz and Ben-David, 2014]{shalev2014understanding}
Shalev-Shwartz, S. and Ben-David, S. (2014).
\newblock {\em Understanding machine learning: From theory to algorithms}.
\newblock Cambridge university press.

\bibitem[Somepalli et~al., 2023]{somepalli2023diffusion}
Somepalli, G., Singla, V., Goldblum, M., Geiping, J., and Goldstein, T. (2023).
\newblock Diffusion art or digital forgery? investigating data replication in
  diffusion models.
\newblock In {\em Proceedings of the IEEE/CVF Conference on Computer Vision and
  Pattern Recognition}, pages 6048--6058.

\bibitem[Terwiesch and Ulrich, 2023]{mba_vs_chatgpt}
Terwiesch, C. and Ulrich, K. (2023).
\newblock M.b.a. students vs. chatgpt: Who comes up with more innovative ideas?
\newblock {\em The Wall Street Journal}.

\bibitem[Touvron et~al., 2023]{touvron2023llama}
Touvron, H., Lavril, T., Izacard, G., Martinet, X., Lachaux, M.-A., Lacroix,
  T., Rozière, B., Goyal, N., Hambro, E., Azhar, F., Rodriguez, A., Joulin,
  A., Grave, E., and Lample, G. (2023).
\newblock Llama: Open and efficient foundation language models.

\bibitem[Treffinger, 1996]{treffinger96}
Treffinger, D.~J. (1996).
\newblock {\em {Creativity, Creative Thinking, and Critical Thinking: In Search
  of Definitions}}.
\newblock Center for Creative Learning, Sarasota, FL.

\bibitem[Turing, 2009]{turing2009computing}
Turing, A.~M. (2009).
\newblock {\em Computing machinery and intelligence}.
\newblock Springer.

\bibitem[{van den Bercken} et~al., 2019]{van_den_bercken-2019-evaluating}
{van den Bercken}, L., Sips, R.-J., and Lofi, C. (2019).
\newblock Evaluating neural text simplification in the medical domain.
\newblock In {\em WWW'19 The World Wide Web Conference (WWW)}, pages
  3286--3292, United States. Association for Computing Machinery (ACM).
\newblock WWW 2019 : The Web Conference 2019, 30 years of the web, WWW'19 ;
  Conference date: 13-05-2019 Through 17-05-2019.

\bibitem[Vinker et~al., 2023]{vinker2023concept}
Vinker, Y., Voynov, A., Cohen-Or, D., and Shamir, A. (2023).
\newblock Concept decomposition for visual exploration and inspiration.
\newblock {\em arXiv preprint arXiv:2305.18203}.

\bibitem[Weng et~al., 2019]{weng-2019-unsupervised}
Weng, W.-H., Chung, Y.-A., and Szolovits, P. (2019).
\newblock Unsupervised clinical language translation.
\newblock In {\em Proceedings of the 25th ACM SIGKDD International Conference
  on Knowledge Discovery \& Data Mining}, KDD '19, page 3121–3131, New York,
  NY, USA. Association for Computing Machinery.

\bibitem[Wu et~al., 2022]{wu2022using}
Wu, Y., Suchanek, F., Vasilescu, I., Lamel, L., and Adda-Decker, M. (2022).
\newblock Using a knowledge base to automatically annotate speech corpora and
  to identify sociolinguistic variation.
\newblock In {\em Proceedings of the Thirteenth Language Resources and
  Evaluation Conference}, Marseille, France. European Language Resources
  Association.

\bibitem[Xu and Mannor, 2012]{xu2012robustness}
Xu, H. and Mannor, S. (2012).
\newblock Robustness and generalization.
\newblock {\em Machine learning}, 86(3):391--423.

\bibitem[Xu et~al., 2012a]{xu2012fit}
Xu, K., Zhang, H., Cohen-Or, D., and Chen, B. (2012a).
\newblock Fit and diverse: Set evolution for inspiring 3d shape galleries.
\newblock {\em ACM Transactions on Graphics (TOG)}, 31(4):1--10.

\bibitem[Xu et~al., 2012b]{xu2012paraphrasing}
Xu, W., Ritter, A., Dolan, W.~B., Grishman, R., and Cherry, C. (2012b).
\newblock Paraphrasing for style.
\newblock In {\em Proceedings of COLING 2012}, pages 2899--2914.

\bibitem[Zhao et~al., 2023]{Zhao2023ASO}
Zhao, W.~X., Zhou, K., Li, J., Tang, T., Wang, X., Hou, Y., Min, Y., Zhang, B.,
  Zhang, J., Dong, Z., Du, Y., Yang, C., Chen, Y., Chen, Z., Jiang, J., Ren,
  R., Li, Y., Tang, X., Liu, Z., Liu, P., Nie, J., and rong Wen, J. (2023).
\newblock A survey of large language models.
\newblock {\em ArXiv}, abs/2303.18223.

\bibitem[Zhu et~al., 2010]{zhu-etal-2010-monolingual}
Zhu, Z., Bernhard, D., and Gurevych, I. (2010).
\newblock A monolingual tree-based translation model for sentence
  simplification.
\newblock In {\em Proceedings of the 23rd International Conference on
  Computational Linguistics (Coling 2010)}, pages 1353--1361, Beijing, China.
  Coling 2010 Organizing Committee.

\end{thebibliography}
\bibliographystyle{apalike}

\newpage
\appendices
\onecolumn
\section{Proofs} 
\label{app:proof}
\subsection{Proof of Theorem \ref{thm:1}}
\begin{proof}
Since $L(q(\cdot\mid \I),c) \ge 0$, by using Markov's inequality, it holds  that with probability at least $1-\delta$ over the draw of   $\bc \sim \Dcal$,
$$
L(q(\cdot\mid \bI),\bc)< \delta^{-1}\EE_{c \sim \Dcal_{C}}[L(q(\cdot\mid \I),c)].
$$ 
Since $L(q(\cdot\mid \I),c) \in \{0,1\} \subset [0,1]$, via Hoeffding's inequality \citep{hoeffding1963probability}, it holds that with probability at least $1-t$ over the draw of   $(c_i)_{i=1}^n \sim (\Dcal_{C})^{\otimes n}$,
\begin{align*}
\EE_{c \sim \Dcal_{C}}[L(q(\cdot\mid \I),c)] &<\frac{1}{n}\sum_{i=1}^n L(q(\cdot\mid \Ii),c_i)+ \sqrt{\frac{\ln(1/t)}{2n}}
\end{align*}
Thus, with probability at least $1-t$ over the draw of   $(c_i)_{i=1}^n \sim (\Dcal_{C})^{\otimes n}$,
it holds that with probability at least $1-\delta$ over the draw of   $\bc \sim \Dcal$,
$$
L(q(\cdot\mid \bI),\bc) <\delta^{-1}E_{0}+\delta^{-1}  \sqrt{\frac{\ln(1/t)}{2n}},
$$
where $E_{0}=\frac{1}{n}\sum_{i=1}^n L(q(\cdot\mid \Ii),c_i)$. Here, we have that if  $n \ge\frac{\ln(1/t)}{2(\delta-E_{0})^{2}} $ and $\delta-E_{0} > 0$,
$$
\delta^{-1}E_{0}+\delta^{-1}  \sqrt{\frac{\ln(1/t)}{2n}} \le 1.
$$  Thus,  if  $n \ge\frac{\ln(1/t)}{2(\delta-E_{0})^{2}} $ and $\delta-E_{0} > 0$,
with probability at least $1-t$ over the draw of   $(c_i)_{i=1}^n \sim (\Dcal_{C})^{\otimes n}$, it holds that with probability at least $1-\delta$ over the draw of   $\bc \sim \Dcal$,
$$
L(q(\cdot\mid \bI),\bc) < 1.
$$
Since $L(q(\cdot\mid \bI),\bc)  \in \{0,1\}$, then $L(q(\cdot\mid \bI),\bc)=0$. \end{proof}

\subsection{Proof of Theorem \ref{thm:2}}
\begin{proof}From Assumption \ref{assumption:1},
\begin{align*}
\EE_{c\sim \Dcal_C}[L(q(\cdot\mid \I),c)] &\le\EE_{c\sim \Dcal_C}[\one\{\kl{p(\cdot\mid c)}{q(\cdot\mid \I)} \ge  \tau \}].  
\end{align*}
We bound the right-hand side of this inequality by using a function similar to hinge loss with a slope as follow: for any $r(c)>0$, 
\begin{align*}
\EE_{c\sim \Dcal_C}[L(q(\cdot\mid \I),c)] & =\EE_{c\sim \Dcal_C}[\one\{\tau-\kl{p(\cdot\mid c)}{q(\cdot\mid \I)}\le0   \}] \\ &\le \EE_{c\sim \Dcal_C}\left[\max\left(0,1-\frac{1}{r(c)}\left(\tau-\kl{p(\cdot\mid c)}{q(\cdot\mid \I)}\right) \right)\right].
\end{align*}
Here, from the definition of the KL divergence,   
$$
\kl{p(\cdot\mid c)}{q(\cdot\mid \I)}= \EE_{x \sim p(\cdot\mid c)}\left[\log \frac{p(x\mid c)}{q(x\mid I[c])}\right]= \EE_{x \sim p(\cdot\mid c)}\left[\log p(x\mid c)\right] - \EE_{x \sim p(\cdot\mid c)}[\log q(x\mid I[c])].
$$
Substituting this into the above inequality, 
\begin{align*}
\EE_{c\sim \Dcal_C}[L(q(\cdot\mid \I),c)] & \le \EE_{c\sim \Dcal_C}\left[\max\left(0,1-\frac{\tau}{r(c)}+\frac{\kl{p(\cdot\mid c)}{q(\cdot\mid \I)}}{r(c)}  \right)\right]
 \\ & =\EE_{c\sim \Dcal_C}\left[\max\left(0,A+B  \right)\right],
\end{align*}
where $A=1-\frac{\tau-\EE_{x \sim p(\cdot\mid c)}\left[\log p(x\mid c)\right]}{r(c)}$ and $B=-\frac{ \EE_{x \sim p(\cdot\mid c)}[\log q(x\mid I[c])]}{r(c)}$. Here, we notice that $B \ge 0$ because $r(c)>0$ and $-\log q(x\mid I[c]) \ge 0$ (here we use the fact that $q(\cdot\mid I[c])$ is the probability mass function instead of density). By using the fact of   $B \ge 0$  and the definition of maximum operator, we expand  the cases
of different values of $A$ and $B$ as
\begin{align*}
\max\left(0,A+B  \right) & = \begin{cases}A+B & \text{if $A+B \ge 0$ } \\
0 & \text{if $A+B < 0$} \\
\end{cases} 
 \\ & = \begin{cases}A+B & \text{if $A \ge 0$ } \\
A+B & \text{if $-B\le A < 0$} \\
0  & \text{if $A <-B$}
\end{cases} 
 \\ & \le  \begin{cases}A+B & \text{if $A \ge 0$ } \\
B & \text{if $-B\le A < 0$} \\
B  & \text{if $A <-B$}
\end{cases} 
\end{align*}
where the last line follows from the fact that $A+B \le B$ if $-B\le A < 0$ and $0\le B$.
Since the last two cases output the same value $B$ whenever $A<0$, we can simplify this expression as
\begin{align*}
\max\left(0,A+B  \right) & \le  \begin{cases}A+B & \text{if $A \ge 0$ } \\
B & \text{if $A < 0$} \\
\end{cases}
\\ & = \max(0,A)+B. 
\end{align*}
Substituting this into the above inequality of $\EE_{c\sim \Dcal_C}[L(q(\cdot\mid \I),c)]\le\EE_{c\sim \Dcal_C}\left[\max\left(0,A+B  \right)\right]$,
\begin{align*}
\EE_{c\sim \Dcal_C}[L(q(\cdot\mid \I),c)] & \le\EE_{c\sim \Dcal_C}[\max(0,A)]+\EE_{c\sim \Dcal_C}[B]. 
\end{align*}
We now set $r(c)>0$ to remove the 1st term on the right-hand side of this inequality as 
\begin{align*}
A \le 0 & \Longleftrightarrow\ 1-\frac{\tau-\EE_{x \sim p(\cdot\mid c)}\left[\log p(x\mid c)\right]}{r(c)} \le 0
\\ & \Longleftrightarrow\ r(c) \le\tau-\EE_{x \sim p(\cdot\mid c)}\left[\log p(x\mid c)\right].
\end{align*}
Here, notice that $-\EE_{x \sim p(\cdot\mid c)}\left[\log p(x\mid c)\right]=H[ p(\cdot\mid c)]$, which is the entropy of $ p(\cdot\mid c)$. Thus, by setting $r(c) =\tau+H[ p(\cdot\mid c)]$, we have that $\EE_{c\sim \Dcal_C}[\max(0,A)]=0$ and thus, 
\begin{align*}
\EE_{c\sim \Dcal_C}[L(q(\cdot\mid \I),c)] \le\EE_{c\sim \Dcal_C}[B]&=\EE_{c\sim\Dcal_C}\left[-\frac{1}{r(c)} \EE_{x \sim p(\cdot\mid c)}[\log q(x\mid I[c])]\right] \\ & =\EE_{c\sim\Dcal_C}\EE_{x \sim p(\cdot\mid c)}\left[-\frac{1}{r(c)} \log q(x\mid I[c]) \right]
\\ & =\EE_{(x,c)\sim\Dcal}\left[-\frac{1}{r(c)} \log q(x\mid I[c]) \right], 
\end{align*}
where $\Dcal(x,c)=  \Dcal_C(c)p(x\mid c)$. Here, since  $-\log q(x\mid I[c])\in [0, M]$ and $\frac{1}{r(c)} \in [0,\frac{1}{\rmin}]$ almost surely, we have that $-\frac{1}{r(c)} \log q(x\mid I[c]) \in[0,\frac{M}{\rmin}]$ almost surely. (The discussion regarding the upper bound of the negative log likelihood, $-\log q$, is detailed in Appendix~\ref{apd:upper_nll}.)
Thus, by using Hoeffding's inequality \citep{hoeffding1963probability}, it holds that with probability at least $1-t$ over the draw of   $(x_{i},c_i)_{i=1}^n \sim \Dcal^{\otimes n}$,
\begin{align*}
\EE_{(x,c)\sim\Dcal}\left[-\frac{1}{r(c)} \log q(x\mid I[c]) \right]<-\frac{1}{n}\sum_{i=1}^n \frac{1}{r(c_{i})} \log q(x_{i} \mid I[c_{i}])+\frac{M}{\rmin} \sqrt{\frac{\ln(1/t)}{2n}}.
\end{align*} 
Combining with above two inequalities, 
$$
\EE_{c\sim \Dcal_C}[L(q(\cdot\mid \I),c)]   <-\frac{1}{n}\sum_{i=1}^n \frac{1}{r(c_{i})} \log q(x_{i} \mid I[c_{i}])+\frac{M}{\rmin} \sqrt{\frac{\ln(1/t)}{2n}}. 
$$
Since the log of the products is the sum of logs, using  $q (x \mid  I[c])=\prod_{t=1}^T q(x^{(t)} \mid x^{(t-\omega:t-1)},I[c])$,
\begin{align*}
\EE_{c\sim \Dcal_C}[L(q(\cdot\mid \I),c)] <-\frac{1}{n}\sum_{i=1}^n \frac{1}{r(c_{i})} \sum_{t=1}^T \log q(x_{i} ^{(t)}\mid x^{(t-\omega:t-1)}_{i},I[c_{i}])+\frac{M}{\rmin} \sqrt{\frac{\ln(1/t)}{2n}}.
\end{align*}

Since $L(q(\cdot\mid \I),c) \ge 0$, by using Markov's inequality, it holds  that with probability at least $1-\delta$ over the draw of   $\bc \sim \Dcal$,
$L(q(\cdot\mid \bI),\bc)< \delta^{-1}\EE_{c \sim \Dcal_{C}}[L(q(\cdot\mid \I),c)]$. Thus, with probability at least $1-t$ over the draw of   $(x_{i},c_i)_{i=1}^n \sim \Dcal^{\otimes n}$,
it holds that with probability at least $1-\delta$ over the draw of   $\bc \sim \Dcal$,
$$
L(q(\cdot\mid \bI),\bc) <\delta^{-1} E_1+\frac{\delta^{-1}M}{\rmin} \sqrt{\frac{\ln(1/t)}{2n}}. 
$$   
where $E_1=-\frac{1}{n}\sum_{i=1}^n \frac{1}{r(c_{i})} \sum_{t=1}^T \log q(x_{i} ^{(t)}\mid x^{(t-\omega:t-1)}_{i},I[c_{i}])$. Here, we have that if  $n \ge\frac{M^{2}\ln(1/t)}{2\rmin^{2}(\delta-E_{1})^{2}} $ and $\delta-E_{1} > 0$,
$$
\delta^{-1} E_1+\frac{\delta^{-1}M}{\rmin} \sqrt{\frac{\ln(1/t)}{2n}}\le 1.
$$  Thus,  if  $n \ge\frac{M^{2}\ln(1/t)}{2\rmin^{2}(\delta-E_{1})^{2}} $ and $\delta-E_{1} > 0$,
with probability at least $1-t$ over the draw of   $(x_{i},c_i)_{i=1}^n \sim \Dcal^{\otimes n}$, it holds that with probability at least $1-\delta$ over the draw of   $\bc \sim \Dcal$,
$$
L(q(\cdot\mid \bI),\bc) < 1.
$$
Since $L(q(\cdot\mid \bI),\bc)  \in \{0,1\}$, then $L(q(\cdot\mid \bI),\bc)=0$. 
 
\end{proof}

\subsection{Proof of Corollary \ref{coro:2}}
\begin{proof}
Following all the proof steps of Theorem \ref{thm:1} while replacing $c$ and $\Dcal_C$ by $z=(u,c)$ and $\Dcal_{U,C}$, we have the following. If  $E_{2}<\delta$ and   $n \ge\frac{\ln(1/t)}{2(\delta-E_{2})^{2}} $, with probability at least $1-t$ over the draw of    $(u_{i}, c_i)_{i=1}^n \sim (\Dcal_{U,C})^{\otimes n}$, it holds that with probability at least $1-\delta$ over the draw of  $(\bu ,\bc )\sim \Dcal_{U,C}$, 
\begin{align*}
L(\Mcal(\bz), \bz) &< \delta^{-1}\EE_{z\sim \Dcal_{U,C}}[L(\Mcal(z),z)] 
 <\frac{\delta^{-1}}{n}\sum_{i=1}^n L(\Mcal(z_{i}),z_{i})+\delta^{-1} \sqrt{\frac{\ln(1/t)}{2n}}  \le 1.
\end{align*}
This implies the desired statement. 
\end{proof}

\subsection{Proof of Corollary \ref{coro:3}}
\begin{proof}
Following all the proof steps of Theorem \ref{thm:2} while replacing $z$ and $\Dcal_C$ by $z=(u,c)$ and $\Dcal_{U,C}$, we have the following. By setting $r(z) =\tau+H[ p(\cdot\mid z)]$, if  $E_{3}<\delta$ and   $n \ge\frac{M^{2}\ln(1/t)}{2\rmin^{2}(\delta-E_{3})^{2}} $, with probability at least $1-t$ over the draw of    $(x_{i},u_{i}, c_i)_{i=1}^n \sim \Dcal_{}^{\otimes n}$, it holds that with probability at least $1-\delta$ over the draw of  $(\bu ,\bc )\sim \Dcal_{U,C}$, 
\begin{align*}
L(\Mcal(\bz), \bz) &<\delta^{-1}\EE_{z \sim \Dcal_{U,C}}[L(\Mcal(z),z)] 
\\  &\le\delta^{-1} \EE_{z \sim \Dcal_{U,C}}\left[\max\left(0,1-\frac{1}{r(z)}\left(\tau-\kl{p(\cdot\mid z)}{\Mcal(z)}\right) \right)\right]
\\ & \le\delta^{-1}\EE_{(x,u,c)\sim\Dcal}\left[-\frac{1}{r(z)} \log q(x\mid u,I[c]) \right],
\\ & \le -\frac{\delta^{-1}}{n}\sum_{i=1}^n \frac{1}{r(z_{i})} \log q(x_{i} \mid u_{i},I[c_{i}])+\frac{\delta^{-1}M}{\rmin} \sqrt{\frac{\ln(1/t)}{2n}} \\ & =-\frac{\delta^{-1}}{n}\sum_{i=1}^n \frac{1}{r(z_{i})} \sum_{t=1}^T \log q(x_{i} ^{(t)}\mid x^{(t-\omega:t-1)}_{i},u_{i}, I[c_{i}])+\frac{\delta^{-1}M}{\rmin} \sqrt{\frac{\ln(1/t)}{2n}}\le 1.
\end{align*} 
This implies the desired statement. 
\end{proof}

\subsection{Proof of Corollary \ref{coro:4}}
\begin{proof}
Following all the proof steps of Theorem \ref{thm:2} while replacing $z$ and $\Dcal_C$ by $z=(u,c)$ and $\Dcal_{U,C}$, we have the following. Since $r(z) =\tau+H[ p(\cdot\mid z)]$, we have that with probability at least $1-\delta$ over the draw of   $\bz=(\bu ,\bc )\sim \Dcal_{U,C}$, 
\begin{align*}
L_{\bar z}[q_{S}]&<\delta^{-1}\EE_{(u,c) \sim \Dcal_{U,C}}[L( q_{S}(\cdot \mid u,I[c]),u,c)] 
\\  &\le\delta^{-1} \EE_{(u,c) \sim \Dcal_{U,C}}\left[\max\left(0,1-\frac{1}{r(u,c)}\left(\tau-\kl{p(\cdot\mid u,c)}{ q_{S}(\cdot\mid u,I[c])}\right) \right)\right]
\\ & \le\delta^{-1}\EE_{(x,u,c)\sim\Dcal}\left[-\frac{1}{r(z)} \log q_{S}(x\mid u,I[c]) \right].
\end{align*} 
By using the definition of $Q(t)$ and taking the union bound, we have that with probability at least $1-\delta$ over  an draw of $S=(v_{i})_{i=1}^n \sim \Dcal^{\otimes n}$ and $\bz\sim \Dcal_{U,C}$,  
\begin{align*}
L_{\bar z}[q_{S}] &<-\frac{2\delta^{-1}}{n}\sum_{i=1}^n \frac{1}{r(z_{i})} \log q_{S}(x_{i} \mid u_{i},I[c_{i}])+\Ocal\left( \sqrt{\frac{Q(\delta/2)}{\delta ^{2}n}}\right)
\\ & =-\frac{2\delta^{-1}}{n}\sum_{i=1}^n \frac{1}{r(z_{i})} \sum_{t=1}^T \log q_{S}(x_{i} ^{(t)}\mid x^{(t-\omega:t-1)}_{i},u_{i}, I[c_{i}])+\Ocal\left( \sqrt{\frac{Q(\delta/2)}{\delta ^{2}n}}\right).
\end{align*}
This proves the first inequality in the desired statement. The second inequality in the desired statement follows from the fact that  $\frac{1}{r(z_{i})} \le \frac{1}{\rmin}$ almost surely. 
\end{proof}

\section{Additional Discussions}
\subsection{On the use of context prompting}
The concept of relative creativity, as defined in Definition \ref{def:1}, does not explicitly incorporate the conditioning on a specific context prompt. Nevertheless, all derived results seamlessly adapt to such context conditioning. This adaptation is achieved by substituting the original probability space with a conditional probability space, predicated on a given context. Given that a conditional probability space retains the fundamental properties of a probability space, the application of this theory to the conditional probability scenario necessitates no alterations. Consequently, in Sections \ref{sec:stat_relative_creativity_autoregressive} and \ref{sec:stat_relative_creativity_prompt}, we expand the statistical creativity framework to encompass the context prompting scenario. The findings, as presented in Corollary~\ref{coro:2} and \ref{coro:3}, evaluate statistical creativity within the setting of model performance under context prompting.


\subsection{The upper bound on the negative log likelihood}
\label{apd:upper_nll}
In Theorem  \ref{thm:2}, the condition of $-\log q(x\mid I[c])\le M$ is easily satisfiable, e.g., by using softmax over choices or  any distribution that puts non-zero probability over all $x \in \Xcal$. Even when we have $q$ that does not satisfy this condition,   we can easily modify $q$ to $q+\epsilon$ with a normalization for some small $\epsilon>0$: since a normalized version of $q+\epsilon$\  puts non-zero probability over all $x \in \Xcal$, this satisfies the condition.

\end{document}